\documentclass[runningheads]{llncs}

\usepackage{eccv}

\usepackage{graphicx}
\usepackage{booktabs}

\usepackage{soul}

\usepackage{amsmath,amsthm}
\usepackage{multirow}
\usepackage{wrapfig}
\usepackage{graphicx}
\usepackage{xcolor}

\usepackage{booktabs}
\usepackage[table]{xcolor}
\definecolor{color3}{rgb}{0.95,0.95,0.95}
\usepackage{graphicx}
\usepackage{amsmath}
\usepackage{arydshln}
\usepackage{array}

\usepackage{comment}

\newtheorem{prop}{Proposition}

\usepackage[accsupp]{axessibility}

\usepackage[hidelinks]{hyperref}

\usepackage{orcidlink}

\begin{document}

\title{HRDiT: Training-Free High-Resolution Image Generation with Off-the-Shelf Diffusion Transformer Models} 

\titlerunning{HRDiT: Training-Free High-Resolution Image Generation}

\author{Yu Xue\inst{1}\orcidlink{0009-0006-3961-890X} \and
Haoxuan Qu\inst{1}\orcidlink{0000-0001-5054-3394}\thanks{Corresponding author.} \and
Zhuoling Li\inst{1}\orcidlink{0009-0004-2166-3353} \and
Hongbin Xu\inst{2}\orcidlink{0000-0002-3455-1527} \and
Jianxiong Yin\inst{3}\orcidlink{0000-0003-4686-2768} \and
Simon See\inst{3}\orcidlink{0000-0002-4958-9237} \and
Hossein Rahmani\inst{1}\orcidlink{0000-0003-1920-0371} \and
Jun Liu\inst{1}\orcidlink{0000-0002-4365-4165}}

\authorrunning{Y.~Xue et al.}

\institute{Lancaster University, Lancaster, UK \and
South China University of Technology, Guangzhou, China \and
NVIDIA AI Tech Centre, Singapore\\
\email{\{y.xue9,h.qu5,z.li81,h.rahmani,j.liu81\}@lancaster.ac.uk}, \email{hongbinxu1013@gmail.com}, \email{\{jianxiongy,ssee\}@nvidia.com}}

\maketitle

\begin{abstract}
  Training-free text-to-high-resolution image generation has recently attracted growing research attention. However, existing studies on this task primarily focus on adapting off-the-shelf U-Net-based diffusion models to high resolutions, with limited progress on adapting off-the-shelf Diffusion Transformer (DiT) models despite their strong text-to-image generation capabilities at limited resolutions. In this work, we find two key challenges particularly hindering the application of off-the-shelf DiT models for high-resolution image synthesis in a training-free manner, namely, spatial disorder and long generation time. To address these challenges, we propose a novel method tailored to adapt off-the-shelf DiT models for high-resolution image synthesis. Extensive experiments show the efficacy of our method. Our code is available at: \url{https://github.com/zylwithxy/HRDiT}.
  \keywords{Text-to-high-resolution image generation \and Training-free \and Off-the-shelf diffusion transformer models}
\end{abstract}

\section{Introduction}
\label{sec:intro}

Text-to-high-resolution image generation aims to directly synthesize high-quality, large-resolution images from text prompts. 
It benefits diverse applications such as digital art creation \cite{train_higen2}, advertising \cite{hartmann2025power}, and game development \cite{game}. 
To tackle this task, existing methods can be broadly categorized into \textit{training-based} and \textit{training-free} approaches.
\textit{Training-based} methods \cite{hoogeboom2023simple,guo2024make,liu2025boosting,zhang2025ledit} optimize new generators specifically for high-resolution outputs. However, they generally require substantial computational resources and large-scale high-resolution datasets for effective training \cite{du2024demofusion,train_higen}, limiting their practicality.
Given these limitations, recently, \textit{training-free} methods have emerged as an appealing alternative.
Instead of optimizing new generators, these methods aim to unlock the potential of off-the-shelf text-to-image diffusion models typically trained at limited resolutions (e.g., $1024 \times 1024$), driving them to effectively generate higher-resolution images in a training-free manner.
Due to their simplicity and flexibility, such approaches have attracted growing attention recently \cite{he2023scalecrafter,qiu2024freescale,yang2025fam,zhang2024hidiffusion}.

Among existing training-free approaches, the early dominance of U-Net architectures in off-the-shelf text-to-image diffusion models has led most of them \cite{he2023scalecrafter,qiu2024freescale,yang2025fam,zhang2024hidiffusion} to focus on adapting U-Net-based diffusion models to high resolutions.
However, the landscape of off-the-shelf diffusion models has recently shifted: Diffusion Transformer (DiT)-based models now substantially outperform their U-Net counterparts at limited resolutions \cite{peebles2023scalable}, driving the rapid emergence of various powerful DiT-based off-the-shelf text-to-image models, such as Stable Diffusion 3 \cite{sd3} and FLUX \cite{blackforestlabs2024flux}. 
However, despite this shift, training-free approaches for adapting off-the-shelf DiT models to high-resolution generation remain largely underexplored.
Motivated by this research gap, in this work, we aim to narrow down this gap, \textit{exploring effective adaption of off-the-shelf DiT models for high-resolution image generation in a training-free manner}.

\begin{wrapfigure}[28]{r}{0.51\textwidth}
\centering
\includegraphics[width=\linewidth]{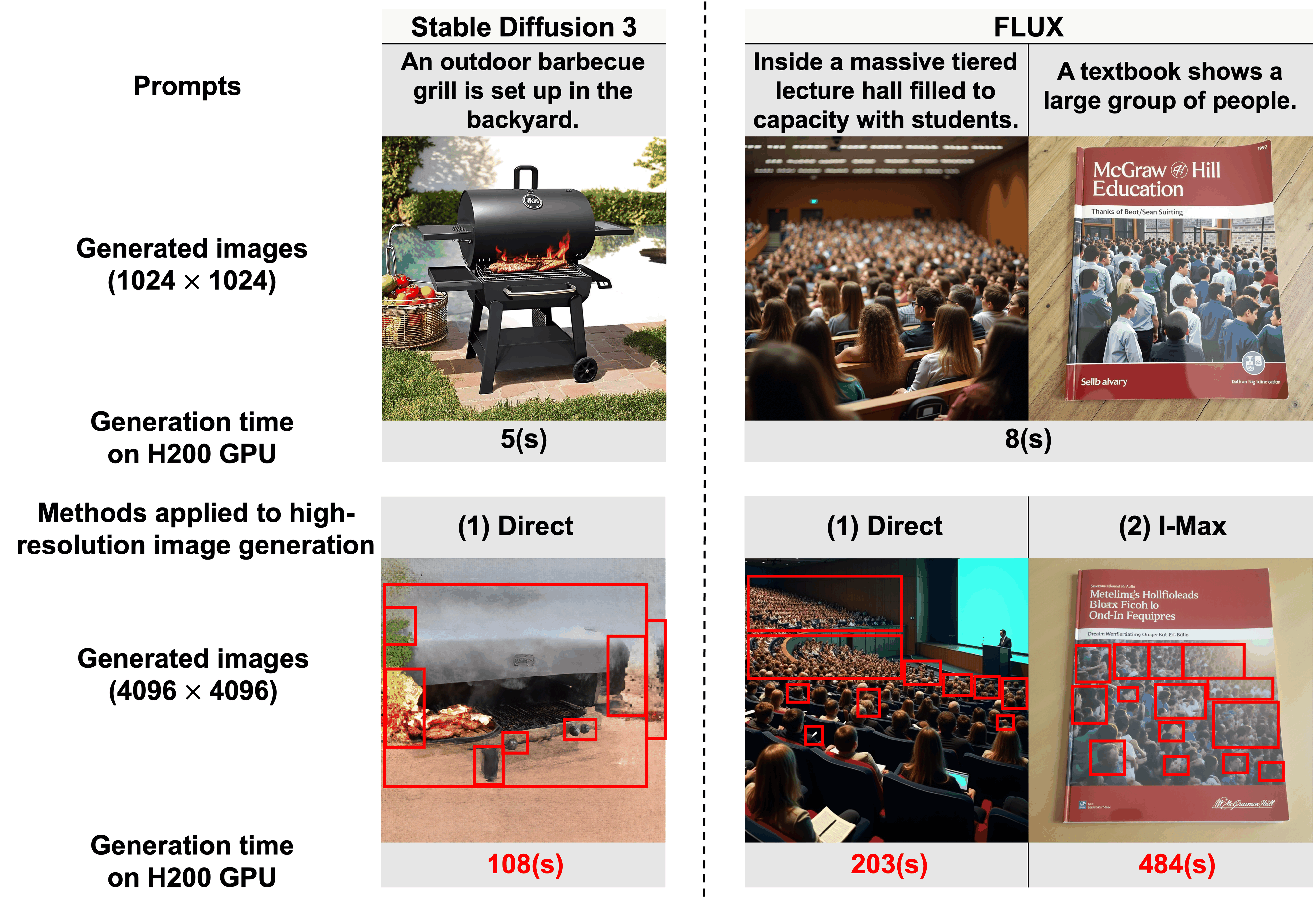}
\caption{Results of (1) \textit{directly} adapting off-the-shelf text-to-image DiT models to high-resolution ($4096$$\times$$4096$) generation, and (2) applying the DiT-tailored method I-Max~\cite{Imax}. At limited resolution ($1024$$\times$$1024$), off-the-shelf DiT models generate high-quality images quickly. However, when adapted to higher resolutions through \textit{direct} adaptation or I-Max, they exhibit spatial disorder (see red boxes and zoom-in) and incur long generation times (red text). Note that off-the-shelf DiT models can be \textit{directly} adapted since they mechanically accept higher-resolution initial noises, while I-Max is, by design, applicable to FLUX but not Stable Diffusion 3.}
\label{fig:intro_challenge}
\end{wrapfigure}
To achieve this, we first attempt to adapt off-the-shelf DiT models to higher resolutions either directly or by applying the existing DiT-tailored method I-Max~\cite{Imax}. Profiling across a wide range of generation tasks, we observe that, regardless of the adaptation strategy, off-the-shelf DiT models encounter two critical challenges that severely hinder their practicality at high resolutions.
We illustrate these challenges in Fig.~\ref{fig:intro_challenge} and summarize them as follows.
(1) \ul{Spatial disorder}: both adaptation strategies lead to the generation of images with noticeable spatial disorder (highlighted by red boxes in Fig.~\ref{fig:intro_challenge}), which markedly degrades overall image quality. (2) \ul{Long generation time}: when adapted for high-resolution image synthesis, off-the-shelf DiT models also suffer from very long generation time, further limiting their practicality. To effectively adapt off-the-shelf DiT models to high resolutions, we therefore aim to address these two challenges in the remainder of this work.

To this end, theoretical and empirical analyses are first drawn on to try to identify the key underlying causes of these challenges.
The theoretical analysis (detailed in Sec.~\ref{sec:3_1}) reveals that, a key reason behind the \textit{spatial disorder} issue can lie in the ``limited expressiveness''
of off-the-shelf DiT model’s positional embedding mechanism when adapted to high resolutions. 
Meanwhile, the empirical profiling shows that, multi-head attention computations are the dominant source of the \textit{long generation time} faced by DiT models at high resolutions. For example, when directly adapting mainstream DiT models (such as FLUX and Stable Diffusion~3) to generate 8K ($8192$$\times$$8192$) images, the multi-head attention computations alone account for over 90\% of the overall generation time for both models (as illustrated in Fig.~\ref{fig:attention_time}), taking 1,385 and 646 seconds per image, respectively.

\begin{wrapfigure}[19]{r}{0.6\textwidth}
  \centering
  \includegraphics[width=\linewidth]{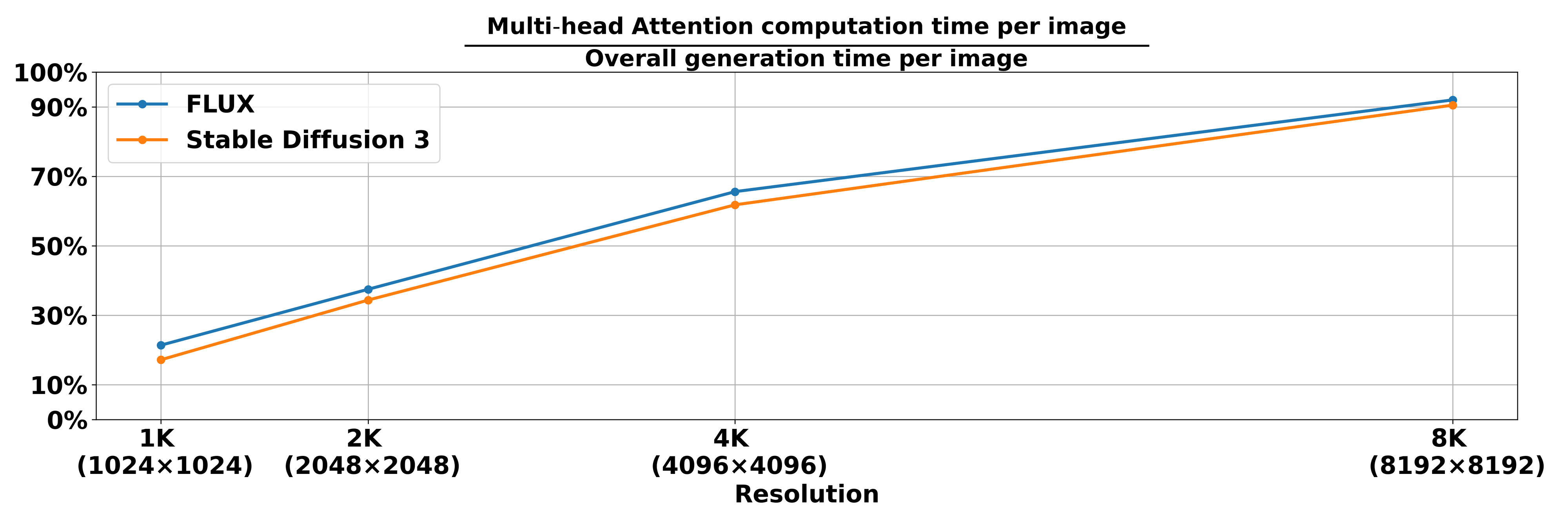}
   \caption{Proportion of time that mainstream off-the-shelf DiT models (FLUX~\cite{blackforestlabs2024flux} and Stable Diffusion 3~\cite{sd3}) spent on multi-head attention computations per image, relative to that on the overall generation process per image, across different image resolutions. These models are adapted to high resolutions in the same \textit{direct} manner as described in the caption of Fig.~\ref{fig:intro_challenge}. As shown, at high resolutions, multi-head attention computations alone dominate the overall generation time of these off-the-shelf DiT models.}
   \label{fig:attention_time}
\end{wrapfigure}
Building on these insights, to effectively adapt off-the-shelf DiT models to high resolutions, in this work, we propose a novel training-free framework, \textbf{H}igh-\textbf{R}esolution \textbf{DiT} (\textbf{HRDiT}). 
It comprises two components, each corresponding to one of the aforementioned challenges based on its identified underlying cause. 
To tackle the \textit{spatial disorder} challenge, inspired by \cite{li2025longdiff,han2023lm}, HRDiT involves a \textit{Spatial Position Alignment} (SPA) component, which adjusts the positional embedding mechanism of off-the-shelf DiT models via two complementary operations (Bundle and Slide) to help it preserve its proper functionality when extended to high resolutions. 
To tackle the \textit{long generation time} problem, inspired by \cite{fu2024moa}, HRDiT further integrates a \textit{Head-adaptive Attention Pruning} (HAP) component to effectively prune redundant computations in the multi-head attention, thereby substantially reducing runtime cost for high resolution generation. 
With these two components, HRDiT enables high-quality and efficient high-resolution image synthesis from off-the-shelf DiT models trained only at limited resolutions, entirely in a training-free manner.

\section{Related Work}
\label{sec:related}

\textbf{Training-free Text-to-high-resolution Image Generation}, which aims to unlock the high-resolution generation ability of off-the-shelf text-to-image diffusion models originally trained at limited resolutions, has recently gained significant attention
\cite{bar2023multidiffusion,Imax,kim2025diffusehigh,du2024demofusion,lin2024accdiffusion,zhang2024hidiffusion,huang2024fouriscale,he2023scalecrafter,qiu2024freescale,yang2025fam,hiflow,zhang2025frecas,wu2025megafusion,yang2025rectifiedhr,cao2024ap,qiu2025cinescale,lai2026pixelrush,vontobel2025hiwave,zhao2025ultraimage,koh2025scalediff,issachar2025dype}.
To tackle this task, some works, such as DemoFusion \cite{du2024demofusion}, DiffuseHigh \cite{kim2025diffusehigh}, and HiFlow \cite{hiflow}, focus primarily on modifying the diffusion process of diffusion models. Some others instead focus more on the architectures of diffusion models. For instance, ScaleCrafter \cite{he2023scalecrafter} employed 
dilation to expand U-Net’s receptive field. HiDiffusion \cite{zhang2024hidiffusion} dynamically adjusted feature map resolutions in the outer U-Net blocks to tackle the object repetition issue. FreeScale~\cite{qiu2024freescale} integrated restrained dilated convolutions and multi-scale fusion into the U-Net to enhance both global and local generation fidelity. 
Beyond U-Net-based designs, the architecture of DiT models has also been explored, e.g., by I-Max \cite{Imax}. Yet, we find that such investigations still leave two key challenges (i.e., spatial disorder and long generation time) that critically limit the effective application of off-the-shelf DiT models at high resolutions unaddressed.

In this work, we focus on the architectural perspective, and specifically target these two key challenges that significantly hinder DiT models from effective high-resolution generation. 
By identifying their key causes and correspondingly integrating proper strategies to address them, our framework enables off-the-shelf DiT models to achieve high-quality, efficient high-resolution synthesis without any retraining.

\section{The Proposed HRDiT Framework}
\label{sec:method}

Here, we aim to effectively adapt off-the-shelf DiT models 
to high-resolution image synthesis, by addressing two critical challenges that severely hinder their practical use at high resolutions: \textit{spatial disorder} and \textit{long generation time}. Drawing on analyses, we identify the key underlying causes of each challenge. Correspondingly, we propose \textbf{HRDiT}, a novel training-free framework integrated with two components, SPA and HAP,
respectively addressing aforementioned challenges based on identified underlying causes. Below, we detail these components.

\subsection{Spatial Position Alignment (SPA)}
\label{sec:3_1}

To handle the \textit{spatial disorder} challenge, we first examine its underlying cause, drawing inspiration from \cite{han2023lm}. 
Specifically, in DiT models, the positional embedding mechanism typically plays a crucial role in expressing positional information, distinguishing spatial locations within the generated image, and thus preventing spatial disorder \cite{lee2024groundit,bai2025positional,shen2025mmdit}.
This naturally raises a question: could the spatial disorder observed in off-the-shelf DiT models during high-resolution generation stem from the possible ``limited expressiveness'' of their positional embedding mechanisms, when extended to higher resolutions? To explore this, we first formalize the positional embedding mechanisms in off-the-shelf DiT models, which will then serve as the foundation for the subsequent analysis.

\noindent\textbf{Formalization of position embedding mechanism.} To facilitate a general analysis across different mainstream off-the-shelf DiT models, we aim to first formalize their position embedding mechanisms in a unified manner. In specific, we analyse that the positional embedding mechanisms of mainstream off-the-shelf DiT models, such as FLUX \cite{blackforestlabs2024flux} and Stable Diffusion 3 \cite{sd3}, though varying in detailed designs, can be unified under the following formulation: 
\begin{equation}
\setlength{\abovedisplayskip}{3pt}
\setlength{\belowdisplayskip}{3pt}
\begin{aligned}
\label{Eq:1}
\mathbf{q_i}=g_{q}(\mathbf{x_i},f^{tok}_{pe}(i)), \mathbf{k_i}=g_{k}(\mathbf{x_i},f^{tok}_{pe}(i)), \mathbf{v_i}=g_{v}(\mathbf{x_i},f^{tok}_{pe}(i)),
\end{aligned}
\end{equation}
where $\mathbf{x_i}$ denotes the $i$-th token in the input hidden state of an attention block, and $\mathbf{q_i}$, $\mathbf{k_i}$, $\mathbf{v_i}$ are the corresponding query, key, and value vectors derived from $\mathbf{x_i}$ via the functions $g_{q}(\cdot,\cdot)$, $g_{k}(\cdot,\cdot)$, and $g_{v}(\cdot,\cdot)$, respectively. 
To inject spatial awareness, the position embedding mechanisms feed the token-level positional signal $f^{tok}_{pe}(i)$ once into each of the three functions $g_{q}(\cdot,\cdot)$, $g_{k}(\cdot,\cdot)$, and $g_{v}(\cdot,\cdot)$. 
Additional details on this formulation are in Supplementary.

Denote $T$ the total number of visual tokens in the input hidden state. After obtaining the query, key, and value vectors for all $T$ tokens (i.e., $\{\mathbf{q_t}\}_{t=0}^{T-1}$, $\{\mathbf{k_t}\}_{t=0}^{T-1}$, and $\{\mathbf{v_t}\}_{t=0}^{T-1}$) via Eq.~\ref{Eq:1}, the attention in off-the-shelf DiT models can then be performed in a spatial-aware manner as:
\begin{equation}
\setlength{\abovedisplayskip}{3pt}
\setlength{\belowdisplayskip}{3pt}
\label{Eq:2}
\mathbf{c_{i, j}} = \frac{e^{\mathbf{q_i}\mathbf{k_j}^\intercal}}{\sum_{t=1}^{T} e^{\mathbf{q_i}\mathbf{k_t}^\intercal}} \mathbf{v_j},
\end{equation}
where $\mathbf{c_{i,j}}$ denotes the contribution of token $j$ to the attention output at position $i$, and $e$ is the Euler number. For simplicity, we omit the stabilization factor $\sqrt{d}$, where $d$ is the dimensionality of each key vector, in Eq.~\ref{Eq:2}. At this point, we arrive at a unified formalization of how positional embedding mechanisms in mainstream off-the-shelf DiT models are designed to inject spatial awareness into their attention operations, which underpins the analysis below.

\noindent\textbf{Expressiveness analysis.} Based on our above unified formalization of positional embedding mechanisms in mainstream off-the-shelf text-to-image DiT models, we next ask whether existing theoretical tools can be drawn on to analyze the expressiveness of these mechanisms when they are extended to high resolutions with substantially more tokens, i.e., much larger $T$. To this end, we observe that the pseudo-dimension-based analysis developed in prior work~\cite{han2023lm} for relative positional embeddings can, when combined with our formalization, be adapted to perform the desired analysis under mild assumption, across mainstream off-the-shelf DiT models including FLUX and Stable Diffusion 3, \textit{regardless of whether the positional embeddings are relative or not}. As elaborated below, this adapted theoretical analysis reveals that \ul{these mechanisms can indeed suffer from limited expressiveness when extended to high resolutions}.

To set up the adapted analysis, we introduce the following notations. (1) By combining Eq.~\ref{Eq:1} and Eq.~\ref{Eq:2}, we first express $\mathbf{c_{i,j}}$ via a single function $g$ as:
\begin{equation}
\setlength{\abovedisplayskip}{3pt}
\setlength{\belowdisplayskip}{3pt}
\label{Eq:2_5}
\mathbf{c_{i,j}} = g(\mathbf{x_i}, \mathbf{x_j}, f_{pe}(i, j)),
\end{equation}
where $g$ represents the complete computation performed by off-the-shelf DiT models to obtain $\mathbf{c_{i,j}}$ from the input hidden-state tokens $\mathbf{x_i}$ and $\mathbf{x_j}$. Expressing $\mathbf{c_{i,j}}$ in this way, the positional information involved in computing $\mathbf{c_{i,j}}$ can be compactly represented by a single \textit{pairwise positional signal} denoted as $f_{pe}(i, j)$, rather than by two separate token-level positional signals $f^{tok}_{pe}(i)$ and $f^{tok}_{pe}(j)$ (see supplementary for more details). This compact representation facilitates the subsequent analysis. (2) We further denote $S_{pe} \;=\; \{\, f_{pe}(i,j) \mid i \in \{0,\dots,T-1\},~ j \in \{0,\dots,T-1\} \,\}$ as the set of all possible pairwise positional signals. With these notations established, we now proceed to the adapted theoretical analysis.

\begin{prop}
\label{theorem:1}
Let $dis_{g}$ denote a distance measure quantifying the discrepancy between two pairwise positional signals in the output space of the function $g$ (see Supplementary for exact definition of $dis_{g}$).
By definition of $g$ in Eq.~\ref{Eq:2_5}, $dis_{g}$ then measures the discrepancy between two pairwise positional signals, after they are respectively propagated through $g$ during attention computation.
Using $dis_{g}$, we partition all elements in the set $S_{pe}$ into $h(T)$ mutually exclusive subsets such that any two elements within the same subset satisfy $dis_{g} \le \epsilon$, while any two elements from different subsets satisfy $dis_{g} > \epsilon$, where $\epsilon$ is a predefined threshold.
Intuitively, the resulting $h(T)$ characterizes the positional embedding mechanism’s ability to distinguish between different pairwise positional signals in $S_{pe}$ after attention computation.
A larger $h(T)$, indicating that $S_{pe}$ can be divided into a greater number of well-separated subsets under $dis_{g}$, corresponds to a stronger distinction capability of the positional embedding mechanism.
Yet meanwhile, $h(T)$ can be proved to be upper-bounded by:
\begin{equation}
\setlength{\abovedisplayskip}{3pt}
\setlength{\belowdisplayskip}{3pt}
\label{eq:3}
h(T) \;\le\; \lambda \cdot \Big(\sup_{S_{pe}} |g(\mathbf{x_i}, \mathbf{x_j}, f_{pe}(i,j))|\Big)^{2\xi},
\end{equation}
where $\xi$ denotes the pseudo-dimension of the function class $\mathcal{G} = \{\, g(\cdot, \cdot, f_{pe}(i,j)) \mid i \in \{0, \dots, T-1\},~ j \in \{0, \dots, T-1\} \,\}$, and $\lambda = (\frac{e}{\epsilon})^{2\xi} \cdot 2^{4\xi+1}$. 
Notably, $\lambda$ can be regarded as a constant when the threshold $\epsilon$ is fixed.
\end{prop}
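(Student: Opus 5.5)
The plan is to follow the pseudo-dimension/covering-number route of LM-Infinite~\cite{han2023lm}, working with the pairwise positional signal $f_{pe}(i,j)$ from Eq.~\ref{Eq:2_5}. We treat each element $s\in S_{pe}$ as indexing a function $g_s:=g(\cdot,\cdot,s)\in\mathcal{G}$ on the input pairs $(\mathbf{x_i},\mathbf{x_j})$. We then define $dis_g(s,s')$ as an (empirical) $L_1$-type distance between $g_s$ and $g_{s'}$ under some distribution over input pairs. Under this definition, the partition of $S_{pe}$ into $h(T)$ subsets with intra-subset distance $\le\epsilon$ and inter-subset distance $>\epsilon$ induces an $\epsilon$-separated set in $\mathcal{G}$: choose one representative per subset. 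Hence $h(T)$ is at most the $\epsilon$-packing number $\mathcal{M}(\epsilon,\mathcal{G},dis_g)$. The mild assumption we need is that $g$ is bounded in absolute value by $B:=\sup_{S_{pe}}|g|$. Since $g$ is vector-valued, we apply the argument coordinatewise or to a scalar projection.

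The key step is Haussler's packing bound for function classes of pseudo-dimension $\xi$ taking values in $[-B,B]$ (equivalently $[0,2B]$ after shifting by $B$). It states that the $\epsilon$-packing number in $L_1(P)$ satisfies
\[
\mathcal{M}(\epsilon,\mathcal{G},L_1(P)) \le 2\Big(\frac{2e\cdot 2B}{\epsilon}\ln\frac{2e\cdot 2B}{\epsilon}\Big)^{\xi},
\]
or, in the cruder form we will use, $\mathcal{M}\le 2\big(\tfrac{2e\cdot 2B}{\epsilon}\big)^{2\xi}$, using $\ln x\le x$. Expanding gives $2\,(4e B/\epsilon)^{2\xi}=2\cdot 2^{4\xi}(e/\epsilon)^{2\xi}B^{2\xi}=\lambda B^{2\xi}$ with $\lambda=(e/\epsilon)^{2\xi}2^{4\xi+1}$, which is exactly the form of Eq.~\ref{eq:3}. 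Chaining $h(T)\le\mathcal{M}\le\lambda B^{2\xi}$ then finishes the proof. We should double-check the factor bookkeeping: the range width is $2B$, and we need the step $\ln x\le x$, which requires $x\ge1$. If $2eB/\epsilon<1$, then at most one separated class exists and the bound holds trivially.

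The main obstacle is making the definition of $dis_g$ compatible with both the partition assumption and Haussler's bound. The partition asks for a clean clustering; a general metric need not admit one with those exact properties, but we only need the inter-subset separation. We will therefore state $dis_g$ as the $L_1(P)$ distance between the induced functions, so that representatives are pairwise $>\epsilon$ apart and Haussler's lemma applies verbatim. The secondary point is that $\mathcal{G}$ is defined through the finite index set $S_{pe}$, so $\xi$ and $B$ may depend on $T$. This is exactly what the bound is meant to expose. For the vector-valued $g$, we will take a fixed linear functional of the output, or the sup over coordinates, before invoking the scalar pseudo-dimension.
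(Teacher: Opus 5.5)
Your argument is sound for the $dis_g$ you adopt, and its engine is the paper's. The paper argues by contradiction. Assuming $h(T)>\lambda(\sup_{S_{pe}}|g|)^{2\xi}$, it sets $\tau=(h(T)/\lambda)^{1/(2\xi)}$, so $\mathcal{G}$ takes values in $[-\tau,\tau]$. It shifts to $[0,2\tau]$, applies a Haussler-type bound with range $2\tau$, and after $\ln x<x$ finds that the bound equals exactly $h(T)$. Your forward computation $2(4eB/\epsilon)^{2\xi}=\lambda B^{2\xi}$ is the same algebra with $B$ in place of $\tau$.

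The genuine differences lie in the two choices you flagged as the main obstacle. The supplement defines $dis_g$ as the expected \emph{squared} difference of the two $g$-outputs under the trained input distributions. It quotes the lemma as a bound on the $\epsilon$-\emph{covering} number in that squared-$L_2$ sense. You instead use the $L_1(P)$ distance and the $\epsilon$-\emph{packing} number. Yours is the form in which Haussler's theorem is actually stated. It also reaches $h(T)$ in one line, because one representative per subset is an $\epsilon$-separated family. The paper's final inference, from a cover of size $<h(T)$ to the impossibility of $h(T)$ pairwise $\epsilon$-separated functions, does not follow from a covering bound as such: a cover only excludes separation at a larger scale ($2\epsilon$ for a metric, $4\epsilon$ for squared $L_2$).

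Your change of metric is also not cosmetic: with the squared-$L_2$ $dis_g$, the inequality with these constants fails. Take $\mathcal{G}$ to be the $\sqrt{2B}+1$ constant functions equally spaced by $\sqrt{2B}$ in $[-B,B]$ (choose $B$ with $\sqrt{2B}$ an integer; a family of constants has $\xi=1$), and take $\epsilon$ just below $2B$. Every pair has squared distance $\geq 2B>\epsilon$, so $h(T)=\sqrt{2B}+1$, while $\lambda B^{2}\approx 8e^{2}\approx 59$. For instance, $B=5000$ gives $h(T)=101$. Consistently, running your chain through $\mathbb{E}(f-f')^2\le 2B\,\mathbb{E}|f-f'|$ would only give a bound of order $(B^2/\epsilon)^{2\xi}$. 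So state your result explicitly as the proposition with $dis_g$ taken in $L_1(P)$.

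One slip to fix. When $\epsilon>2B$ you do get $h(T)\le 1$, but $\lambda B^{2\xi}=2(4eB/\epsilon)^{2\xi}$ tends to $0$ as $B\to 0$ with $\xi$ fixed (pseudo-dimension is scale-invariant). So the bound does not hold trivially there; it can fail. Haussler's theorem requires $\epsilon\le 2B$, which also ensures $4eB/\epsilon\ge 2e$, so replacing $x\ln x$ by $x^2$ is legitimate. The proposition must carry this hypothesis; the paper's proof uses it tacitly when it invokes the lemma with range $2\tau$.
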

The proof of Proposition~\ref{theorem:1} is provided in Supplementary. Note that, ideally, for the positional embedding mechanism to function properly, $h(T)$, which measures the mechanism's ability to express different pairwise positional signals in a distinguishable manner after attention computation, should reach its maximum possible value, namely the size $|S_{pe}|$ of the set $S_{pe}$ (i.e., satisfy $h(T)=|S_{pe}|$). Otherwise, at least two pairwise positional signals in the set $S_{pe}$ would have cross-distances below $\epsilon$ after attention computation, making them indistinguishable and thereby inducing spatial disorder. 
However, Proposition~\ref{theorem:1} suggests that $h(T)=|S_{pe}|$ may not hold at high resolutions. This is because, (1) as resolution increases, the number of tokens $T$ grows rapidly, and $|S_{pe}|$ typically scales accordingly (see Supplementary for details).
(2) Yet meanwhile, Eq.~\ref{eq:3} shows that the attainable $h(T)$ is upper-bounded by the supremum of $g(\cdot)$ (up to a constant factor $\lambda$ and a constant power $2\xi$). Taken together, these two factors indicates that, as resolution increases, achieving $h(T)=|S_{pe}|$ would necessitate the aforementioned upper bound of $h(T)$ to scale at least as fast as $|S_{pe}|$. 
The associated analysis, however, suggests that this scaling may fail to occur in practice. Taking FLUX as an example, when an analysis similar to that in \cite{li2025longdiff} is applied to 4K image generation, fewer than 10\% of the test samples are found to satisfy Eq.~\ref{eq:3} after substituting $h(T)$ with $|S_{pe}|$, and this ratio can further decrease at higher resolutions.
The above indicate that, \ul{when adapted for high-resolution generation, the positional embedding mechanisms in off-the-shelf text-to-image DiT models can exhibit limited expressiveness, failing to adequately distinguish pairwise positional signals, thus leading to observed spatial disorder}.

\noindent\textbf{Addressing limited expressiveness through SPA.} Building on the above analysis, we aim to address the limited expressiveness of DiT models’ positional embedding mechanisms to handle the spatial disorder problem. Achieving this addressing, however, is non-trivial, particularly because our goal is to do so in a training-free and universally compatible manner across diverse mainstream off-the-shelf DiT models. To tackle this challenge, we note that, though different mainstream DiT models may differ in their specific implementations of the positional function $f_{pe}$, they all share a common structure: $f_{pe}$ takes token indices (i.e., $i$ and $j$) as inputs. This observation suggests that if we can achieve the addressing by directly manipulating these token indices $i$ and $j$ before they are fed into $f_{pe}$, the addressing process can be carried out in a universally compatible manner. 
To this end, we take inspiration from \cite{li2025longdiff,jin2024llm}, adopt a similar approach, and integrate a training-free component, termed Spatial Position Alignment (SPA), into our framework to achieve the addressing in such a way.

Specifically, SPA consists of two key operations: \textbf{bundle} and \textbf{slide}. In the \textbf{bundle} operation, to facilitate $h(T)$ in attaining $|S_{pe}|$ (i.e., to satisfy $h(T) = |S_{pe}|$) and thus help the positional embedding mechanism to function properly even at high resolutions, we aim to reduce $|S_{pe}|$. 
To achieve this by manipulating token indices before they are fed into the positional function $f_{pe}$, we group tokens into several bundles and feed the corresponding bundle indices, rather than individual token indices, into $f_{pe}$. In this way, as the number of grouped bundles is typically much smaller than the number of tokens, we can constrain the input diversity of $f_{pe}$ and thus reduce $|S_{pe}|$, which reflects the output diversity of $f_{pe}$ based on the definition of $S_{pe}$. In turn, as elaborated in the above analysis associated to Proposition~\ref{theorem:1}, this helps the positional embedding mechanism better preserve the distinguishability among positional signals in $S_{pe}$ and restore its proper functionality. However, despite such benefit, bundling also introduces an inherent limitation: positional distinctions among tokens within the same bundle become indistinguishable, potentially weakening fine-grained spatial discrimination in generated images. To tackle this, SPA integrates a complementary \textbf{slide} operation, which further equips the framework with fine-grained positional distinguishability among tokens within each bundle. Together, these two operations effectively rectify the positional embedding mechanism, facilitating off-the-shelf DiT models to preserve spatial coherence even when adapted to high resolutions.
Below, we detail these two operations.

\ul{(1) The bundle operation.} In this operation, we start from the original positional function $f_{pe}(i, j)$, whose inputs are the token indices themselves (i.e., $i$ and $j$).
Our goal is to group tokens into bundles and feed $f_{pe}$ with the corresponding bundle indices instead.
Formally, this rewrites $f_{pe}(i, j)$ as $f_{pe}(\phi_{bundle}(i), \phi_{bundle}(j))$, where $\phi_{bundle}(i)$ and $\phi_{bundle}(j)$ denote the indices of the bundles containing the $i$-th and $j$-th tokens, respectively.
For simplicity, we here bundle tokens sequentially according to their token indices, so that the resulting bundle indices approximately preserve the original positional order among the $T$ tokens. 
In other words, tokens belonging to bundles with larger bundle indices naturally possess larger token indices than those tokens in bundles with smaller bundle indices.
Under this setup, $\phi_{bundle}(\cdot)$ is defined as follows:
\begin{equation}
\setlength{\abovedisplayskip}{3pt}
\setlength{\belowdisplayskip}{3pt}
\phi_{bundle}(i) =
\begin{cases}
0, & 0 \le i < N_1, \\[3pt]
\displaystyle \left\lceil \dfrac{i + 1 - N_1}{N} \right\rceil, & N_1 \leq i < T,
\end{cases}
\end{equation}
where $N$ is a hyperparameter, and $N_1$ ranges from $1$ to $N$. $N_1$ and $N$ denote the number of tokens in the first bundle and in each middle bundle (excluding the first and last ones), respectively. Here, the first bundle is intentionally not enforced to reach the same size as the middle ones, which facilitates the subsequent \textbf{slide} operation. 
Furthermore, given $N_1$ and $N$, the size of the last bundle can be automatically determined as $T - N_1 - \left\lfloor \frac{T - N_1}{N} \right\rfloor \!\times N$, so it does not need to be explicitly specified.

\begin{wrapfigure}[23]{r}{0.5\textwidth}
\centering
\includegraphics[width=\linewidth]{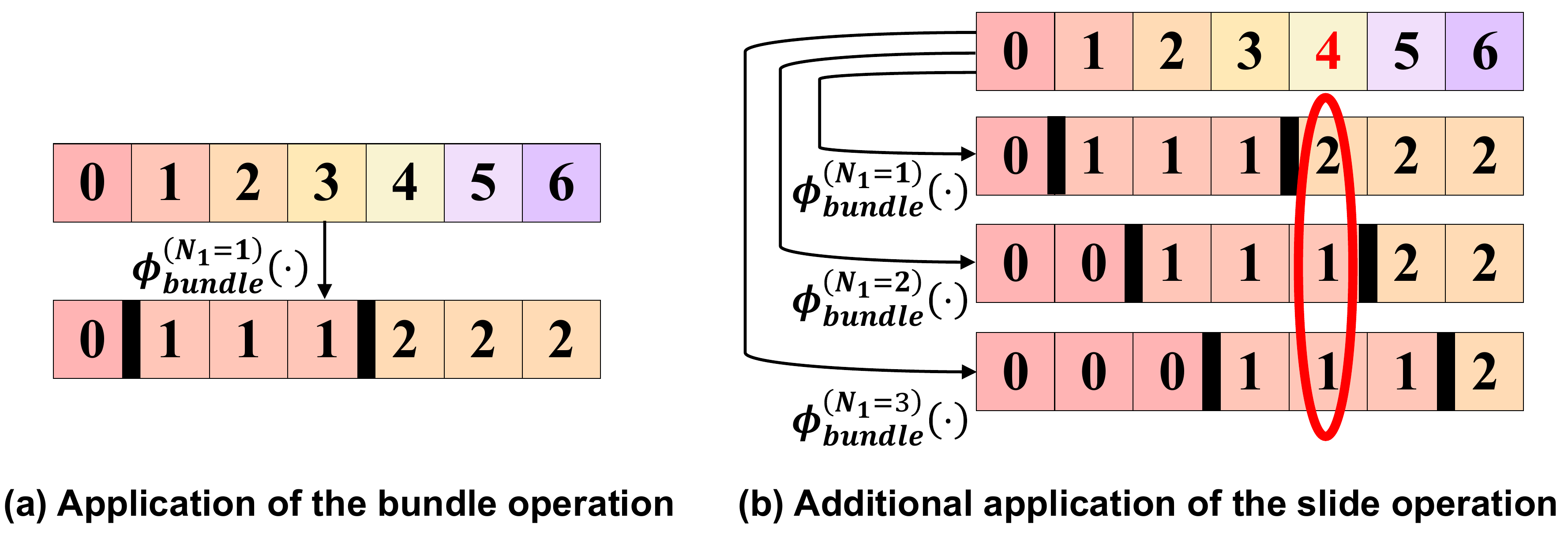}
\caption{Illustration of the \textbf{bundle} and \textbf{slide} operations, with $T = 7$ and $N = 3$. (a) The \textbf{bundle} operation with $N_1 = 1$: using the mapping function $\phi_{bundle}(\cdot)$ with $N_1 = 1$ (i.e., $\phi_{bundle}^{(N_1 = 1)}(\cdot)$), the $T = 7$ token indices from 0 to 6 are mapped to $\left\lceil \frac{T + N - N_1}{N} \right\rceil = 3$ bundle indices (0 to 2), reducing the input diversity of $f_{pe}$ when its inputs are bundle indices rather than token indices. (b) Additionally applying the \textbf{slide} operation introduces $N = 3$ variants of the mapping function, each with a distinct $N_1$ value. Progressively varying $N_1$ (from 1 to 3) produces bundles with sliding boundaries across successive bundling procedures (the thick black line segments sliding rightward).}
   \label{fig:slide_example}
\end{wrapfigure}
An illustrative example of the application of the $\phi_{bundle}(\cdot)$ function is provided in Fig.~\ref{fig:slide_example}(a). As shown, applying $\phi_{bundle}(\cdot)$ reduces the diversity of each input to $f_{pe}(\cdot, \cdot)$, from the total number of token indices $T$ to the total number of bundle indices, which can be computed as $\left\lceil \frac{T + N - N_1}{N} \right\rceil$. 
Accordingly, when computing $|S_{pe}|$ based on $f_{pe}$ with bundle indices as its inputs, we can replace $T$ with $\left\lceil \frac{T + N - N_1}{N} \right\rceil$, yielding a significantly smaller $|S_{pe}|$. 
For instance, in the FLUX model, instead of $|S_{pe}| = 2T - 1$, we now have $|S_{pe}| = 2 \times \left(\left\lceil \frac{T +N - N_1}{N} \right\rceil\right) - 1$ after applying the bundle operation.
Following the analysis associated with Proposition~\ref{theorem:1}, this reduction enhances the distinguishability among elements in $S_{pe}$, and thus helps restore the proper functionality of the positional embedding mechanism.

\ul{(2) The slide operation.} However, applying the bundle operation alone cannot fully resolve the spatial disorder problem, as it eliminates positional distinctions among tokens within the same bundle, thereby weakening fine-grained spatial discrimination. To tackle this issue, a \textbf{slide} operation is further integrated to recover intra-bundle positional distinctions, with the key idea of using \textit{sliding bundle boundaries}, as elaborated below.

Specifically, given the bundle size $N$ for each middle bundle, the slide operation constructs $N$ variants of the mapping function $\phi_{bundle}(\cdot)$, denoted as $\{\phi_{bundle}^{(N_1 = 1)}(\cdot), \dots, \phi_{bundle}^{(N_1 = N)}(\cdot)\}$, where each variant adopts a distinct first-bundle size $N_1$. As illustrated in Fig.~\ref{fig:slide_example}(b), constructed in this way, these variants produce bundles whose boundaries progressively slide as $N_1$ varies from $1$ to $N$. Importantly, with such sliding, by collectively considering all $N$ mapping functions rather than any single one in isolation, the desired intra-bundle position distinctions can be achieved among tokens.
For example, as shown in the red ellipse of Fig.~\ref{fig:slide_example}(b), the token with index $4$ is distinctively (uniquely) represented by the set of $N = 3$ bundle indices $\{2, 1, 1\}$ derived respectively from the $N = 3$ mapping functions, and no any other token corresponds to this same set. 
We also prove in Supplementary that this representation approach always uniquely represents every token index.
The above means that, the collective use of all mapping functions constructed in the slide operation ensures clear positional distinctions among all tokens. Then, as the final step of the slide operation, to realize this collective use in a training-free manner, we compute:
\begin{equation}
\setlength{\abovedisplayskip}{3pt}
\setlength{\belowdisplayskip}{3pt}
\label{eq:slide}
\mathbf{c^{SPA}_{i, j}} = \frac{1}{N} \sum_{n=1}^{N} g\bigg(\mathbf{x_i}, \mathbf{x_j}, f_{pe}\Big(\phi_{bundle}^{(N_1 = n)}(i), \phi_{bundle}^{(N_1 = n)}(j)\Big)\bigg),
\end{equation}
where $\mathbf{c^{SPA}_{i, j}}$ (replacing $\mathbf{c_{i, j}}$ in Eq.~\ref{Eq:2_5}) denotes the contribution of token $j$ to the attention output at position $i$ after incorporating SPA into our framework. Intuitively, Eq.~\ref{eq:slide} applies $g(\cdot)$ multiple times in parallel, once under each mapping function, and then averages their corresponding results. In the $n$-th application of $g(\cdot)$, the only modification introduced by SPA relative to the off-the-shelf DiT model is that the token indices $i$ and $j$ are replaced by their corresponding bundle indices $\phi_{\text{bundle}}^{(N_1 = n)}(i)$ and $\phi_{\text{bundle}}^{(N_1 = n)}(j)$ before being passed to the positional function $f_{pe}$, while all other parts remain unchanged. The mapping from token indices to bundle indices is illustrated in Fig.~\ref{fig:slide_example}.

In summary, to tackle the \textit{spatial disorder} problem, SPA simply needs to replace $\mathbf{c_{i, j}}$ with $\mathbf{c^{SPA}_{i, j}}$ in the attention computation of off-the-shelf DiT models at high resolutions. It is worth noting that, since the $N$ token-to-bundle index mapping procedures in Eq.~\ref{eq:slide} can be executed in parallel, SPA generally incurs only a minor increase in overall generation time (e.g., from 203 to 212 seconds on average per 4K ($4096 \times 4096$) image generation, as reported in Tab.~\ref{Tab:ablation_1}).

\subsection{Head-adaptive Attention Pruning (HAP)}
\label{sec:3_2}

Above, we handle the \textit{spatial disorder} challenge. Yet, as illustrated in Fig.~\ref{fig:intro_challenge}, the \textit{long generation time} problem still limits the practicality of adapting off-the-shelf DiT models for high-resolution synthesis. To address this problem, the underlying cause is first profiled. As shown in Fig.~\ref{fig:attention_time}, the profiling reveals that, \ul{when off-the-shelf DiT models are directly applied to higher testing resolutions, \textit{multi-head attention} computations rapidly dominate the overall generation time due to the quadratic complexity of attention}. This profiling motivates a natural strategy for accelerating DiT’s high-resolution generation process: focus on multi-head attention as the \textit{dominant cause} of the \textit{long generation time} problem and prune ``redundant’’ attention computations that contribute little to final generation quality.

To realize this idea, we first examine whether existing vision transformer pruning techniques in image can be well-suited to perform ``redundant'' attention pruning in our training-free text-to-high-resolution image generation problem. We however find them ill-suited. Many existing pruning methods \cite{zheng2022savit,tang2022patch} rely on additional training, which directly contradicts the training-free constraint of our problem. To circumvent this issue, recent works \cite{yuan2024ditfastattn,liu2024clear} introduce local-window-based attention, which prunes computations by focusing attention to local regions (windows), enabling reducing runtime in a training-free manner. Yet, these methods remain ill-suited when facing our problem, i.e., when handling high resolutions, in which long-range interactions are essential for maintaining generation fidelity~\cite{leroy2024winwin}, local-window-based approaches~\cite{liu2024clear} typically still require additional training to prevent noticeable quality degradation, again conflicting with our training-free constraint. 

Given these incompatibilities, addressing the \textit{long generation time} of off-the-shelf DiT models at high resolutions calls for inspiration beyond conventional vision pruning. In particular, motivated by recent progress in attention pruning for long-context text generation \cite{fu2024moa}, we adapt this idea to the training-free text-to-high-resolution image generation setting. Specifically, we integrate an attention pruning component into HRDiT, termed \textit{Head-adaptive Attention Pruning} (HAP), which enables effective attention pruning in our context without additional training. Overall, HAP builds on the local-window-based paradigm to take advantage of its potential training-free applicability. Meanwhile, HAP seeks to further address the paradigm's tendency to cause noticeable generation quality degradation at high resolutions when equipped with no additional training.

For HAP to achieve the latter, we draw on a key observation from prior works~\cite{zhang2025fast,wu2025retrieval,fu2024moa}: once trained, different heads in a Transformer’s multi-head attention tend to assume distinct yet relatively stable roles, each focusing on a particular \textit{attention scope}. Specifically, for a given head, when computing the attention output at position $i$, only tokens within a certain scope around that position contribute meaningfully, and this scope may vary across heads rather than being uniformly local. For example, some heads specialize in long-range interactions and operate over broad scopes, whereas others focus on local structures within narrow neighborhoods. Building on this inspiration, we further validate that a similar head-dependent attention-scope pattern also emerges in our high-resolution image generation setting.

Motivated by the above, a natural way to prune redundant attention computations when scaling off-the-shelf DiT models to high-resolution generation is hypothesized to be the adaptive allocation of attention windows across heads according to each head’s inherent attention scope, rather than the uniform application of identical local windows to all heads as in conventional local-window-based strategies. Ideally, such head-adaptive window allocation allows more accurate identification of truly redundant computations, thereby enabling attention pruning that preserves high generation quality while substantially improving efficiency. However, determining appropriate attention scopes for each head in practice remains non-trivial.
To address this, inspired by \cite{fu2024moa}, a two-step preparatory procedure is adopted in HAP to derive head-specific scopes from the off-the-shelf DiT model before inference. 
Once derived, these head-specific scopes can then be applied as per-head attention windows during inference, enabling substantial runtime reduction while preserving generation fidelity. The two preparatory steps adopted in HAP are detailed below.

\ul{Preparatory Step 1: appropriateness quantification for each candidate scope.} To determine an appropriate attention scope for each head, the effect of each candidate scope on computational cost and generation quality is first quantified.
Formally, let $N_{\text{head}}$ denote the total number of attention heads in an off-the-shelf DiT model, and assign each head $N_{\text{scope}}$ candidate scopes, where $N_{\text{scope}}$ is a hyperparameter.
For each candidate scope $n_{\text{scope}} \in \{1, \dots, N_{\text{scope}}\}$ of each head $n_{\text{head}} \in \{1, \dots, N_{\text{head}}\}$, two indicators are defined:
$I_c(n_{\text{head}}, n_{\text{scope}})$, representing the computational cost of this head under the given scope, and $I_q(n_{\text{head}}, n_{\text{scope}})$, representing the degradation in generation quality when this head operates under the given scope instead of the full attention scope covering all $T$ tokens.
For simplicity, given a head with $T$ input tokens, its $N_{\text{scope}}$ candidate scopes are uniformly defined to span all possible coverage ratios of these tokens.
Specifically, under the $n_{\text{scope}}$-th candidate, the attention output at each position is computed by summing only the contributions from the $\frac{n_{\text{scope}}}{N_{\text{scope}}} \times T$ tokens neighboring that position, with all $T$ tokens being covered only when $n_{\text{scope}} = N_{\text{scope}}$.
Based on this setup, we next describe how $I_c(n_{\text{head}}, n_{\text{scope}})$ and $I_q(n_{\text{head}}, n_{\text{scope}})$ are computed.

For $I_c(n_{\text{head}}, n_{\text{scope}})$, note that the computational complexity of producing the attention output at each position scales linearly with the number of tokens participating in that position’s attention computation. 
Accordingly, the total attention computational cost for the $n_{\text{head}}$-th head under the $n_{\text{scope}}$-th scope, i.e., $I_c(n_{\text{head}}, n_{\text{scope}})$, can be simply quantified by the total number of participating tokens involved in its attention computations, aggregated over all positions.
Deriving $I_q(n_{\text{head}}, n_{\text{scope}})$, however, is more difficult. 
Naively, a brute-force estimation of $I_q(n_{\text{head}}, n_{\text{scope}})$ across all $N_{\text{head}}$ heads and $N_{\text{scope}}$ candidate scopes could require at least $N_{\text{head}} \times (N_{\text{scope}} - 1)$ DiT model passes, each followed by a loss computation on the generated image to evaluate its impact on generation quality (see Supplementary for details).
This can make the preparatory stage by itself already computationally prohibitive. To accelerate the above brute-force estimation procedure, a natural attempt shall be to use losses computed on intermediate features as efficient proxies for losses computed on the final generated images \cite{zhang2025ditfastattnv2}. However, we find that, at high resolutions, such intermediate signals can correlate poorly with final image quality, making them unreliable estimators of $I_q(n_{\text{head}}, n_{\text{scope}})$.
To overcome the limitations of the above strategies, inspired by \cite{fu2024moa}, HAP adopts its efficient estimation strategy, where careful mathematical derivations show that \ul{a single model pass under the full attention scope suffices to \textit{efficiently yet reliably} estimate $I_q(n_{\text{head}}, n_{\text{scope}})$ across all heads and candidate scopes}.
Formally, under this estimation approach, by leveraging a Taylor expansion and accounting for the softmax operation in DiT attention, $I_q(n_{\text{head}}, n_{\text{scope}})$ can be estimated using only a single model pass, as
\begin{equation}
\begin{aligned}
\label{eq_Iq}
\setlength{\abovedisplayskip}{3pt}
\setlength{\belowdisplayskip}{3pt}
I_q(n_{\text{head}}, n_{\text{scope}}) \approx
&  
\sum_{(u, v) \in S^{(n_{\text{scope}})}_{\text{omit}}} \Big(\frac{\partial L}{\partial A(u,v)} \cdot \big(-A(u,v)\big) \quad +
\\ &  
\sum_{w\in\{1, \dots, T\} \setminus \{v\}
} \frac{\partial L}{\partial A(u,w)} \cdot \frac{ A(u,w)  \cdot A(u,v)}{1 - A(u,v)}\Big),
\end{aligned}
\end{equation}
where $L$ is the loss computed on the image generated from this single model pass, $A \in \mathbb{R}^{T\times T}$ is the attention score matrix in this model pass corresponding to the $n_{\text{head}}$-th head, and $A(u, v)$ denotes its $(u,v)$-th element.
Here, $S^{(n_{\text{scope}})}_{\text{omit}}$ denotes the set of index pairs $(u,v)$ that fall outside the $n_{\text{scope}}$-th attention scope, i.e., pairs for which when applying the $n_{\text{scope}}$-th attention scope and computing the attention output
at position $u$, the contribution of token $v$ (i.e., $\mathbf{c^{SPA}_{u,v}}$) is omitted. 
In practice, to stabilize the estimation, $I_q$ is further averaged over different input prompts rather than estimated from a single input prompt. Despite this, computing $I_q(n_{\text{head}}, n_{\text{scope}})$ remains lightweight, taking only less than an hour in total across all heads and candidate scopes during the one-time preparatory stage (see Tab.~\ref{Tab:t_pre} in Supplementary).

\ul{Preparatory Step 2: appropriate scope determination.}
At this point, quantitative estimates have been obtained for how different candidate scopes, when assigned to each head in DiT, affect both computational cost and generation quality. The work left is then to assign an appropriate scope to each head such that the collective scope assignment achieves the best trade-off between efficiency and generation quality.

Formally, let $r_c \in [0,1]$, as a hyperparameter, denote the target ratio of total computational cost for all multi-head attention operations, measured relative to that under full attention (i.e., $r_c = 1$ corresponds to using full attention). The problem of determining the optimal collective scope assignment can then be formulated as the optimization problem of minimizing the total generation quality degradation across all heads under this computational constraint:
\begin{equation}
\setlength{\abovedisplayskip}{2pt}
\setlength{\belowdisplayskip}{0pt}
\begin{aligned}
& \mathop{\arg\min}_{(n_{\text{scope}}^{(1)}, \dots, n_{\text{scope}}^{(N_{\text{head}})})}~~~
\;\sum_{n_{\text{head}}=1}^{N_{\text{head}}} I_q(n_{\text{head}}, n_{\text{scope}}^{(n_{\text{head}})}) \\
\text{s.t.}\quad 
&\sum_{n_{\text{head}}=1}^{N_{\text{head}}} I_c(n_{\text{head}}, n_{\text{scope}}^{(n_{\text{head}})}) \le r_c \times \sum_{n_{\text{head}}=1}^{N_{\text{head}}} I_c(n_{\text{head}}, N_{\text{scope}}).
\end{aligned}
\label{eq:7}
\end{equation}
where $n_{\text{scope}}^{(n_{\text{head}})}$ denotes the $n_{scope}$-th candidate scope of the $n_{head}$-th head. Though this optimization problem appears complex, following MoA~\cite{fu2024moa}, it can be reformulated as an integer programming problem, enabling Eq.~\ref{eq:7} to be efficiently solved using linear solvers (e.g., Gurobi~\cite{gurobi}) within minutes.

Through the above two preparatory steps, we obtain a collective scope assignment that can substantially enhance generation efficiency while maintaining high generation quality, requiring only a few DiT passes and a single call to an existing linear solver. Once derived, this assignment can be precompiled into GPU kernels (e.g., via FlashInfer~\cite{ye2025flashinfer}).
During inference, the precompiled assignment can then be applied to eliminate redundant attention computations in a cost-free manner, thereby effectively addressing the \textit{long generation time} issue encountered by off-the-shelf DiT models when adapted to high resolutions.

\subsection{Overall Testing}
\label{sec:3_3}

For testing at the high resolution, before testing, HRDiT first performs the one-time preparatory stage described in Sec.~\ref{sec:3_2}, to determine an appropriate collective scope assignment plan across different attention heads. For per-input generation (testing), HRDiT adopts the determined scope plan to enhance efficiency (handling the \textit{long generation time} problem). Notably, in both the one-time preparatory stage and the per-input generation process, HRDiT computes the per-token contribution $\mathbf{c^{SPA}_{i,j}}$ via Eq.~\ref{eq:slide} to address the \textit{spatial disorder} issue.

\section{Experiments}

\subsection{Experimental Settings \& Implementation Details.} 

\begin{table}[t]
  \caption{Results on 2K and 4K image generation. Notably, by their design, I-Max is applicable to FLUX but not to Stable Diffusion 3 (SD3), whereas FreCaS is applicable to SD3 but not to FLUX.}
  \centering
  \resizebox{\linewidth}{!}{
  \begin{tabular}{c|clcccccc}
    \hline
    Resolution & Model Type &
    Method & FID $\downarrow$ & FID$_\text{p}\downarrow$ & KID $\downarrow$ & KID$_\text{p}\downarrow$ & CLIP Score $\uparrow$ & Latency(s) $\downarrow$ \\
    \hline
    \multirow{13}{*}{$\mathbf{2k}$} & U-Net backbone & SDXL~\cite{sdxl} + FreeScale~\cite{qiu2024freescale} &73.23 &65.04 &0.0093 &0.0132 &30.57 &26 \\
    \cline{2-9}
     & \multirow{12}{*}{DiT-based backbone} & SD3~\cite{sd3} (\textit{Direct}) &81.34 &70.88 &0.0215 &0.0194 &30.23 &17 \\
    & & SD3~\cite{sd3} + DemoFusion~\cite{du2024demofusion} &72.21 &59.24 &0.0132 &0.0125 &31.40 &35 \\
    & & SD3~\cite{sd3} + DiffuseHigh~\cite{kim2025diffusehigh} &70.57 &57.39 &0.0119 &0.0112 &31.59 &20 \\
    & & SD3~\cite{sd3} + FreCaS~\cite{zhang2025frecas} &73.27 &63.55 &0.0139 &0.0132 &31.41 &16 \\
    & & SD3~\cite{sd3} + HiFlow~\cite{hiflow} &70.49 &57.35 &0.0126 &0.0110 &31.62 &18 \\
    & & SD3~\cite{sd3} + Ours &\textbf{67.90} &\textbf{52.59} &\textbf{0.0081} &\textbf{0.0073} &\textbf{31.81} &\textbf{11} \\
    \cline{3-9}
    & & FLUX~\cite{blackforestlabs2024flux} (\textit{Direct}) &73.96 &69.56 &0.0173 &0.0164 &29.63 &47 \\
    & & FLUX~\cite{blackforestlabs2024flux} + DemoFusion~\cite{du2024demofusion} &68.95 &60.53 &0.0127 &0.0120 &30.83 &94 \\
    & & FLUX~\cite{blackforestlabs2024flux} + DiffuseHigh~\cite{kim2025diffusehigh} &67.40 &58.63 &0.0115 &0.0108 &31.01 &52 \\
    & & FLUX~\cite{blackforestlabs2024flux} + I-Max~\cite{Imax} &68.37 &56.96 &0.0121 &0.0106 &30.67 &85 \\
    & & FLUX~\cite{blackforestlabs2024flux} + HiFlow~\cite{hiflow} &67.45 &56.79 &0.0117 &0.0105 &30.73 &49 \\
    & & FLUX~\cite{blackforestlabs2024flux} + Ours &\textbf{64.42} &\textbf{51.55} &\textbf{0.0074} &\textbf{0.0067} &\textbf{31.27} &\textbf{35} \\
    \hline
    \multirow{13}{*}{$\mathbf{4k}$} & U-Net backbone & SDXL~\cite{sdxl} + FreeScale~\cite{qiu2024freescale} &74.85 &68.89 &0.0112 &0.0157 &30.49 &248 \\
    \cline{2-9}
     & \multirow{12}{*}{DiT-based backbone} & SD3~\cite{sd3} (\textit{Direct}) &84.29 &74.45 &0.0243 &0.0231 &30.18 &108 \\
    & & SD3~\cite{sd3} + DemoFusion~\cite{du2024demofusion} &73.39 &60.75 &0.0141 &0.0133 &31.16 &229 \\
    & & SD3~\cite{sd3} + DiffuseHigh~\cite{kim2025diffusehigh} &71.88 &58.76 &0.0130 &0.0126 &31.35 &115 \\
    & & SD3~\cite{sd3} + FreCaS~\cite{zhang2025frecas} &79.30 &72.82 &0.0226 &0.0213 &30.32 &86 \\
    & & SD3~\cite{sd3} + HiFlow~\cite{hiflow} &71.39 &58.40 &0.0133 &0.0118 &31.59 &112 \\
    & & SD3~\cite{sd3} + Ours &\textbf{68.63} &\textbf{53.04} &\textbf{0.0089} &\textbf{0.0079} &\textbf{31.79} &\textbf{58} \\
    \cline{3-9}
    & & FLUX~\cite{blackforestlabs2024flux} (\textit{Direct}) &75.68 &72.73 &0.0192 &0.0188 &29.57 &203 \\
    & & FLUX~\cite{blackforestlabs2024flux} + DemoFusion~\cite{du2024demofusion} &71.18 &62.38 &0.0150 &0.0142 &30.48 &649 \\
    & & FLUX~\cite{blackforestlabs2024flux} + DiffuseHigh~\cite{kim2025diffusehigh} &69.23 &59.13 &0.0132 &0.0125 &30.97 &215 \\
    & & FLUX~\cite{blackforestlabs2024flux} + I-Max~\cite{Imax} &70.11 &59.83 &0.0142 &0.0133 &30.52 &484 \\
    & & FLUX~\cite{blackforestlabs2024flux} + HiFlow~\cite{hiflow} &68.36 &57.43 &0.0124 &0.0114 &30.62 &209 \\
    & & FLUX~\cite{blackforestlabs2024flux} + Ours &\textbf{64.91} &\textbf{51.84} &\textbf{0.0078} &\textbf{0.0074} &\textbf{31.17} &\textbf{116} \\
    \hline
  \end{tabular}
  }
  \label{table:1}
\end{table} 

To evaluate the efficacy of our HRDiT framework, we apply it to several popular off-the-shelf DiT models, 
\noindent including Stable Diffusion 3 (SD3)~\cite{sd3} and FLUX~\cite{blackforestlabs2024flux}. Following the settings in~\cite{hiflow,qiu2024freescale}, we conduct experiments on both 2K ($2048 \times 2048$) and 4K ($4096 \times 4096$) image generation. For text prompts used in our evaluation, following~\cite{du2024demofusion,kim2025diffusehigh}, we randomly sample 1,000 captions from the LAION-5B dataset~\cite{schuhmann2022laion}. To assess image quality, following \cite{kim2025diffusehigh}, we report evaluation metrics including Fréchet Inception Distance (FID), Kernel Inception Distance (KID), their patch-based counterparts ($\text{FID}_{p}$ and $\text{KID}_{p}$), and the CLIP score. In addition, we report the per-generation runtime to evaluate efficiency. 
We set the hyperparameters $N_{\text{scope}} = 50$, $r_c = 0.1$. 
We conduct our experiments on NVIDIA H200 GPU. More experimental settings and implementation details are in Supplementary.

\subsection{Main Results}

In our experiments, we compare against three categories of methods: (1) \textit{Direct}, which directly scales off-the-shelf DiT models to higher-resolution generation by passing them higher-resolution initial noises; 
(2) recent training-free text-to-high-resolution image generation methods applicable to DiT models, including the DiT-tailored method I-Max \cite{Imax}, and methods that focus on modifying the diffusion process of diffusion models \cite{du2024demofusion,kim2025diffusehigh,zhang2025frecas,hiflow}; 
(3) the state-of-the-art training-free text-to-high-resolution image generation method that is tailored to U-Net-based diffusion models (i.e., FreeScale \cite{qiu2024freescale}). For FreeScale, following its original setting, we apply it to the U-Net-based model Stable Diffusion XL (SDXL) \cite{sdxl}.
We present the main quantitative results in Tab.~\ref{table:1}. 
\begin{wrapfigure}[19]{r}{0.58\textwidth}
\centering
\includegraphics[width=\linewidth]{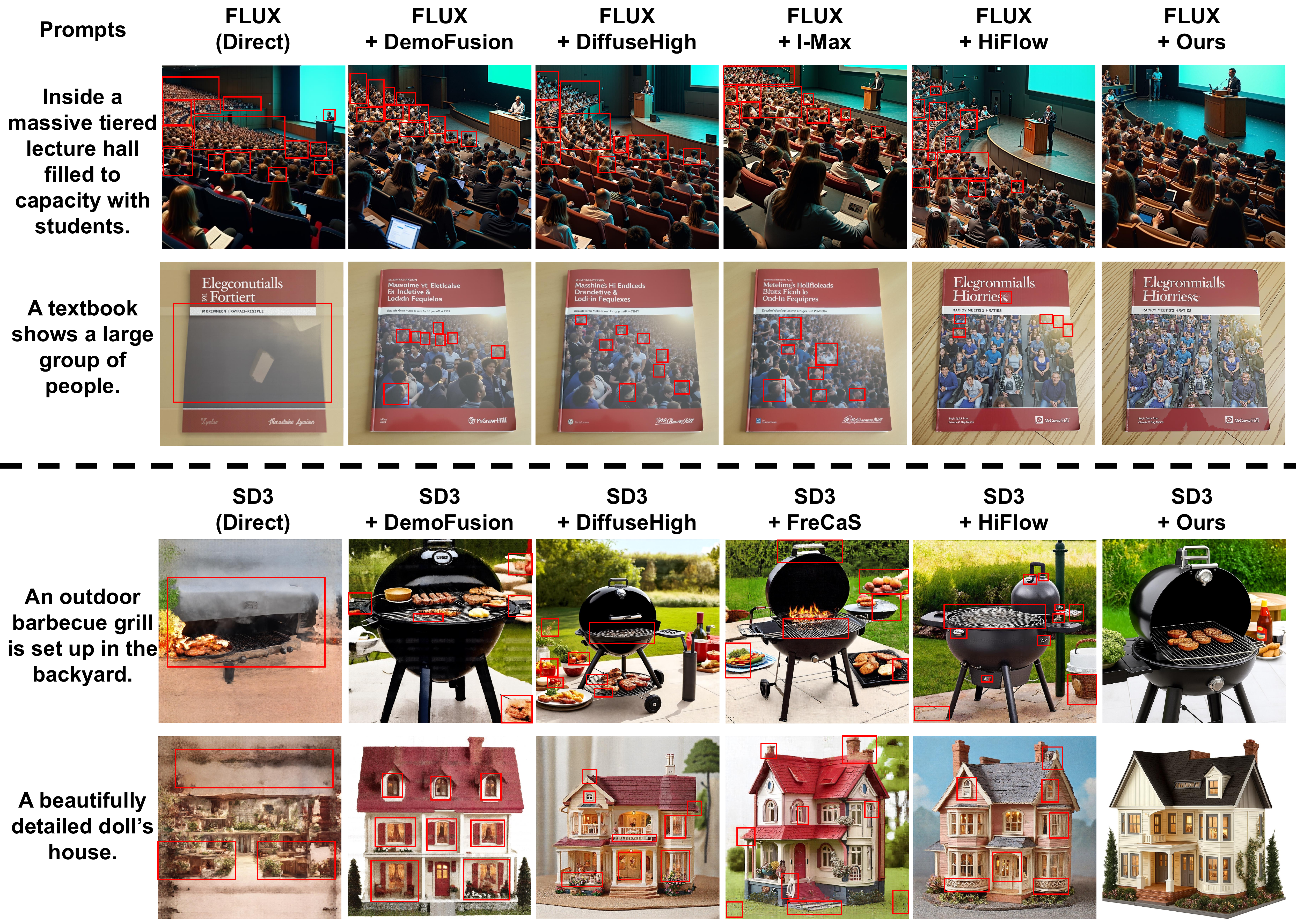}
\caption{Qualitative results of the 4K images generated by our method and by other methods (zoom-in for better view). More qualitative results are in Supplementary.}
\label{fig:fig4_qualitative}
\end{wrapfigure}
As shown, across both 2K and 4K generation tasks, our method consistently outperforms all baselines on all five metrics assessing generation quality, showing its strong ability to produce high-quality, high-resolution images. Moreover, when applied to either Stable Diffusion 3 (SD3) or FLUX, our method achieves substantial efficiency gains over all existing approaches under the same DiT backbone, validating its efficacy in addressing the long generation time challenge. Qualitative comparisons in Fig.~\ref{fig:fig4_qualitative} further confirm that existing methods often yield high-resolution images with spatial disorder (marked by red boxes), while our method preserves strong spatial coherence, further showing its efficacy.

\subsection{Additional Experiments and Ablation Studies}

\textbf{Additional experiments on 8K ($8192 \times 8192$) generation.} To assess scalability, in addition to 2K and 4K, we further evaluate our framework on 8K generation. As shown in Tab.~\ref{table:1_8k}, our method consistently achieves highest generation quality and fastest generation speed across both Stable Diffusion 3 and FLUX, further showing its efficacy at even higher resolutions. Qualitative 8K results are in Supplementary.

\begin{wraptable}[8]{r}{0.5\columnwidth}
  \caption{Evaluation on key components in HRDiT.}
  \centering
  \resizebox{\linewidth}{!}{
  \begin{tabular}{lcccccc}
    \hline
    Method & FID $\downarrow$ & FID$_\text{p}\downarrow$ & KID $\downarrow$ & KID$_\text{p}\downarrow$ & CLIP Score $\uparrow$ & Latency(s) $\downarrow$ \\
    \hline
    FLUX \cite{blackforestlabs2024flux} (\textit{Direct}) &75.68 &72.73 &0.0192 &0.0188 &29.57 &203 \\
    \hline
    Ours (w/o SPA) &75.85 &72.83 &0.0195 &0.0192 &29.55  &110 \\
    Ours (w/o HAP) &64.82 &51.81 &0.0076 &0.0074 &31.19 &212 \\
    Ours (full) &64.91 &51.84 &0.0078 &0.0074 &31.17 &116 \\
    \hline
  \end{tabular}
  }
  \label{Tab:ablation_1}
\end{wraptable}
\noindent\textbf{Impact of key components in the HRDiT framework.} In our framework (\textbf{Ours (full)}), we integrate two key components: SPA and HAP. To assess the contribution of each component, we test two variants, on 4K image generation task using FLUX as the generation model. In the first variant (\textbf{Ours (w/o SPA)}), we replace $\mathbf{c^{SPA}_{i,j}}$ with the original $\mathbf{c_{i,j}}$ in Eq.~\ref{Eq:2_5} as the per-token contribution. In the second variant (\textbf{Ours (w/o HAP)}), we do not apply the determined attention scope assignment plan to prune ``redundant'' attention computations. Instead, we direct employ the full attentions scope for all heads. As shown in Tab.~\ref{Tab:ablation_1}, our full framework significantly outperforms the first variant in generation quality. Meanwhile, it also achieves substantially faster generation speed than the second variant, with negligible degradation in generation quality. Together, these results validate the efficacy of both key components in our framework. \textbf{More ablation studies are in Supplementary.}

\begin{table}[t]
  \caption{Additional results on 8K image generation.} 
  \centering
  \resizebox{\linewidth}{!}{
  \begin{tabular}{c|clcccccc}
    \hline
    Resolution & Model Type &
    Method & FID $\downarrow$ & FID$_\text{p}\downarrow$ & KID $\downarrow$ & KID$_\text{p}\downarrow$ & CLIP Score $\uparrow$ & Latency(s) $\downarrow$ \\
    \hline
    \multirow{13}{*}{$\mathbf{8k}$} & \multirow{1}{*}{U-Net backbone} & SDXL~\cite{sdxl} + FreeScale~\cite{qiu2024freescale} &76.51 &70.50 &0.0123 &0.0176 &30.44 &3958 \\
    \cline{2-9}
     & \multirow{12}{*}{DiT-based backbone} & SD3~\cite{sd3} (\textit{Direct}) &86.75 &77.68 &0.0258 &0.0251 &30.01 &822 \\
    & & SD3~\cite{sd3} + DemoFusion~\cite{du2024demofusion} &76.07 &65.22 &0.0207 &0.0206 &30.96 &2720 \\
    & & SD3~\cite{sd3} + DiffuseHigh~\cite{kim2025diffusehigh} &73.30 &60.97 &0.0166 &0.0161 &31.19 &889 \\
    & & SD3~\cite{sd3} + FreCaS~\cite{zhang2025frecas} &81.19 &76.06 &0.0241 &0.0232 &30.16 &635 \\
    & & SD3~\cite{sd3} + HiFlow~\cite{hiflow} &73.72 &61.77 &0.0179 &0.0173 &31.44 &832 \\
    & & SD3~\cite{sd3} + Ours &\textbf{69.51} &\textbf{54.47} &\textbf{0.0096} &\textbf{0.0083} &\textbf{31.71} &\textbf{454} \\
    \cline{3-9}
    & & FLUX~\cite{blackforestlabs2024flux} (\textit{Direct}) &78.47 &74.59 &0.0212 &0.0201 &29.34 &1708 \\
    & & FLUX~\cite{blackforestlabs2024flux} + DemoFusion~\cite{du2024demofusion} &75.72 &68.71 &0.0201 &0.0187 &30.03 &5749 \\
    & & FLUX~\cite{blackforestlabs2024flux} + DiffuseHigh~\cite{kim2025diffusehigh} &71.02 &63.87 &0.0169 &0.0158 &30.69 &1835 \\
    & & FLUX~\cite{blackforestlabs2024flux} + I-Max~\cite{Imax} &72.54 &65.34 &0.0196 &0.0183 &30.23 &4488 \\
    & & FLUX~\cite{blackforestlabs2024flux} + HiFlow~\cite{hiflow} &70.67 &60.79 &0.0142 &0.0136 &30.43 &1721 \\
    & & FLUX~\cite{blackforestlabs2024flux} + Ours &\textbf{65.73} &\textbf{53.55} &\textbf{0.0089} &\textbf{0.0084} &\textbf{31.04} &\textbf{827} \\
    \hline
  \end{tabular}
  }
  \label{table:1_8k}
\end{table}
\noindent

\section{Conclusion}

In this paper, we have proposed HRDiT, a novel training-free framework for text-to-high-resolution image generation. HRDiT integrates two components, SPA and HAP, to address two key challenges, spatial disorder and long generation time, thereby enabling off-the-shelf DiT models to be effectively adapted for high-resolution synthesis.
Extensive experiments show HRDiT's efficacy.

\bibliographystyle{splncs04}
\bibliography{main}

@String(AAAI = {AAAI})

@inproceedings{zhang2025ditfastattnv2,
  title={DiTFastAttnV2: Head-wise Attention Compression for Multi-Modality Diffusion Transformers},
  author={Zhang, Hanling and Su, Rundong and Yuan, Zhihang and Chen, Pengtao and Shen, Mingzhu and Fan, Yibo and Yan, Shengen and Dai, Guohao and Wang, Yu},
  booktitle={Proceedings of the IEEE/CVF International Conference on Computer Vision},
  pages={16399--16409},
  year={2025}
}

@inproceedings{
zhang2025frecas,
title={FreCaS: Efficient Higher-Resolution Image Generation via Frequency-aware Cascaded Sampling},
author={Zhengqiang Zhang and Ruihuang Li and Lei Zhang},
booktitle={The Thirteenth International Conference on Learning Representations},
year={2025},
url={https://openreview.net/forum?id=TsBDfe8Ra5}
}

@article{haussler1992decision,
  title={Decision theoretic generalizations of the PAC model for neural net and other learning applications},
  author={Haussler, David},
  journal={Information and computation},
  volume={100},
  number={1},
  pages={78--150},
  year={1992},
  publisher={Elsevier}
}

@inproceedings{yu2024scaling,
  title={Scaling up to excellence: Practicing model scaling for photo-realistic image restoration in the wild},
  author={Yu, Fanghua and Gu, Jinjin and Li, Zheyuan and Hu, Jinfan and Kong, Xiangtao and Wang, Xintao and He, Jingwen and Qiao, Yu and Dong, Chao},
  booktitle={Proceedings of the IEEE/CVF conference on computer vision and pattern recognition},
  pages={25669--25680},
  year={2024}
}

@inproceedings{train_higen,
  title={Diffusion-4k: Ultra-high-resolution image synthesis with latent diffusion models},
  author={Zhang, Jinjin and Huang, Qiuyu and Liu, Junjie and Guo, Xiefan and Huang, Di},
  booktitle={Proceedings of the Computer Vision and Pattern Recognition Conference},
  pages={23464--23473},
  year={2025}
}

@article{train_higen2,
  title={Ultrapixel: Advancing ultra high-resolution image synthesis to new peaks},
  author={Ren, Jingjing and Li, Wenbo and Chen, Haoyu and Pei, Renjing and Shao, Bin and Guo, Yong and Peng, Long and Song, Fenglong and Zhu, Lei},
  journal={Advances in Neural Information Processing Systems},
  volume={37},
  pages={111131--111171},
  year={2024}
}

@article{hartmann2025power,
  title={The power of generative marketing: Can generative AI create superhuman visual marketing content?},
  author={Hartmann, Jochen and Exner, Yannick and Domdey, Samuel},
  journal={International Journal of Research in Marketing},
  volume={42},
  number={1},
  pages={13--31},
  year={2025},
  publisher={Elsevier}
}

@article{game,
author = {Qin, Jiayang},
year = {2023},
month = {12},
pages = {107-111},
title = {How does Text-to-image AI Affect Indie Game Designers and Artists?},
volume = {5},
journal = {Journal of Innovation and Development},
doi = {10.54097/f7of9f8k}
}

@misc{blackforestlabs2024flux,
    author={Black Forest Labs},
    title={FLUX},
    year={2024},
    howpublished={\url{https://github.com/black-forest-labs/flux}},
    note = {last accessed 2026/06/29}
}

@inproceedings{sd3,
  title={Scaling rectified flow transformers for high-resolution image synthesis},
  author={Esser, Patrick and Kulal, Sumith and Blattmann, Andreas and Entezari, Rahim and M{\"u}ller, Jonas and Saini, Harry and Levi, Yam and Lorenz, Dominik and Sauer, Axel and Boesel, Frederic and others},
  booktitle={Forty-first international conference on machine learning},
  year={2024}
}

@article{pe1,
  title={Roformer: Enhanced transformer with rotary position embedding},
  author={Su, Jianlin and Ahmed, Murtadha and Lu, Yu and Pan, Shengfeng and Bo, Wen and Liu, Yunfeng},
  journal={Neurocomputing},
  volume={568},
  pages={127063},
  year={2024},
  publisher={Elsevier}
}

@article{pe2,
  title={Attention is all you need},
  author={Vaswani, Ashish and Shazeer, Noam and Parmar, Niki and Uszkoreit, Jakob and Jones, Llion and Gomez, Aidan N and Kaiser, {\L}ukasz and Polosukhin, Illia},
  journal={Advances in neural information processing systems},
  volume={30},
  year={2017}
}

@article{sdxl,
  title={Sdxl: Improving latent diffusion models for high-resolution image synthesis},
  author={Podell, Dustin and English, Zion and Lacey, Kyle and Blattmann, Andreas and Dockhorn, Tim and M{\"u}ller, Jonas and Penna, Joe and Rombach, Robin},
  journal={arXiv preprint arXiv:2307.01952},
  year={2023}
}

@inproceedings{bar2023multidiffusion,
  title={MultiDiffusion: Fusing Diffusion Paths for Controlled Image Generation},
  author={Bar-Tal, Omer and Yariv, Lior and Lipman, Yaron and Dekel, Tali},
  booktitle={International Conference on Machine Learning},
  pages={1737--1752},
  year={2023},
  organization={PMLR}
}

@inproceedings{du2024demofusion,
  title={Demofusion: Democratising high-resolution image generation with no \$\$\$},
  author={Du, Ruoyi and Chang, Dongliang and Hospedales, Timothy and Song, Yi-Zhe and Ma, Zhanyu},
  booktitle={Proceedings of the IEEE/CVF conference on computer vision and pattern recognition},
  pages={6159--6168},
  year={2024}
}

@article{hiflow,
  title={HiFlow: Training-free High-Resolution Image Generation with Flow-Aligned Guidance},
  author={Bu, Jiazi and Ling, Pengyang and Zhou, Yujie and Zhang, Pan and Wu, Tong and Dong, Xiaoyi and Zang, Yuhang and Cao, Yuhang and Lin, Dahua and Wang, Jiaqi},
  journal={arXiv preprint arXiv:2504.06232},
  year={2025}
}

@article{Imax,
  title={I-Max: Maximize the Resolution Potential of Pre-trained Rectified Flow Transformers with Projected Flow},
  author={Du, Ruoyi and Liu, Dongyang and Zhuo, Le and Qi, Qin and Li, Hongsheng and Ma, Zhanyu and Gao, Peng},
  journal={arXiv preprint arXiv:2410.07536},
  year={2024}
}

@article{han2023lm,
  title={Lm-infinite: Zero-shot extreme length generalization for large language models},
  author={Han, Chi and Wang, Qifan and Peng, Hao and Xiong, Wenhan and Chen, Yu and Ji, Heng and Wang, Sinong},
  journal={arXiv preprint arXiv:2308.16137},
  year={2023}
}

@inproceedings{guo2024make,
  title={Make a cheap scaling: A self-cascade diffusion model for higher-resolution adaptation},
  author={Guo, Lanqing and He, Yingqing and Chen, Haoxin and Xia, Menghan and Cun, Xiaodong and Wang, Yufei and Huang, Siyu and Zhang, Yong and Wang, Xintao and Chen, Qifeng and others},
  booktitle={European conference on computer vision},
  pages={39--55},
  year={2024},
  organization={Springer}
}

@inproceedings{hoogeboom2023simple,
  title={simple diffusion: End-to-end diffusion for high resolution images},
  author={Hoogeboom, Emiel and Heek, Jonathan and Salimans, Tim},
  booktitle={International Conference on Machine Learning},
  pages={13213--13232},
  year={2023},
  organization={PMLR}
}

@inproceedings{he2023scalecrafter,
  title={Scalecrafter: Tuning-free higher-resolution visual generation with diffusion models},
  author={He, Yingqing and Yang, Shaoshu and Chen, Haoxin and Cun, Xiaodong and Xia, Menghan and Zhang, Yong and Wang, Xintao and He, Ran and Chen, Qifeng and Shan, Ying},
  booktitle={The Twelfth International Conference on Learning Representations},
  year={2023}
}

@inproceedings{lin2024accdiffusion,
  title={Accdiffusion: An accurate method for higher-resolution image generation},
  author={Lin, Zhihang and Lin, Mingbao and Zhao, Meng and Ji, Rongrong},
  booktitle={European Conference on Computer Vision},
  pages={38--53},
  year={2024},
  organization={Springer}
}

@article{qiu2024freescale,
  title={Freescale: Unleashing the resolution of diffusion models via tuning-free scale fusion},
  author={Qiu, Haonan and Zhang, Shiwei and Wei, Yujie and Chu, Ruihang and Yuan, Hangjie and Wang, Xiang and Zhang, Yingya and Liu, Ziwei},
  journal={arXiv preprint arXiv:2412.09626},
  year={2024}
}

@inproceedings{zhang2024hidiffusion,
  title={Hidiffusion: Unlocking higher-resolution creativity and efficiency in pretrained diffusion models},
  author={Zhang, Shen and Chen, Zhaowei and Zhao, Zhenyu and Chen, Yuhao and Tang, Yao and Liang, Jiajun},
  booktitle={European Conference on Computer Vision},
  pages={145--161},
  year={2024},
  organization={Springer}
}

@inproceedings{yang2025fam,
  title={Fam diffusion: Frequency and attention modulation for high-resolution image generation with stable diffusion},
  author={Yang, Haosen and Bulat, Adrian and Hadji, Isma and Pham, Hai X and Zhu, Xiatian and Tzimiropoulos, Georgios and Martinez, Brais},
  booktitle={Proceedings of the Computer Vision and Pattern Recognition Conference},
  pages={2459--2468},
  year={2025}
}

@inproceedings{peebles2023scalable,
  title={Scalable diffusion models with transformers},
  author={Peebles, William and Xie, Saining},
  booktitle={Proceedings of the IEEE/CVF international conference on computer vision},
  pages={4195--4205},
  year={2023}
}

@inproceedings{
zhang2025fast,
title={Fast Video Generation with Sliding Tile Attention},
author={Yongqi Chen and Peiyuan Zhang and Runlong Su and Hangliang Ding and Ion Stoica and Zhengzhong Liu and Hao Zhang},
booktitle={Forty-second International Conference on Machine Learning},
year={2025},
url={https://openreview.net/forum?id=U74MOXPEJd}
}

@inproceedings{
wu2025retrieval,
title={Retrieval Head Mechanistically Explains Long-Context Factuality},
author={Wenhao Wu and Yizhong Wang and Guangxuan Xiao and Hao Peng and Yao Fu},
booktitle={The Thirteenth International Conference on Learning Representations},
year={2025},
url={https://openreview.net/forum?id=EytBpUGB1Z}
}

@manual{gurobi,
  title        = {Gurobi Optimizer Reference Manual},
  author       = {Gurobi Optimization, LLC},
  year         = {2025},
  url          = {https://www.gurobi.com},
  note = {last accessed 2026/06/29}
}

@article{schuhmann2022laion,
  title={Laion-5b: An open large-scale dataset for training next generation image-text models},
  author={Schuhmann, Christoph and Beaumont, Romain and Vencu, Richard and Gordon, Cade and Wightman, Ross and Cherti, Mehdi and Coombes, Theo and Katta, Aarush and Mullis, Clayton and Wortsman, Mitchell and others},
  journal={Advances in neural information processing systems},
  volume={35},
  pages={25278--25294},
  year={2022}
}

@inproceedings{kim2025diffusehigh,
  title={Diffusehigh: Training-free progressive high-resolution image synthesis through structure guidance},
  author={Kim, Younghyun and Hwang, Geunmin and Zhang, Junyu and Park, Eunbyung},
  booktitle={Proceedings of the AAAI conference on artificial intelligence},
  volume={39},
  number={4},
  pages={4338--4346},
  year={2025}
}

@inproceedings{huang2024fouriscale,
  title={Fouriscale: A frequency perspective on training-free high-resolution image synthesis},
  author={Huang, Linjiang and Fang, Rongyao and Zhang, Aiping and Song, Guanglu and Liu, Si and Liu, Yu and Li, Hongsheng},
  booktitle={European conference on computer vision},
  pages={196--212},
  year={2024},
  organization={Springer}
}

@article{lee2024groundit,
  title={Groundit: Grounding diffusion transformers via noisy patch transplantation},
  author={Lee, Yuseung and Yoon, Taehoon and Sung, Minhyuk},
  journal={Advances in Neural Information Processing Systems},
  volume={37},
  pages={58610--58636},
  year={2024}
}

@inproceedings{wu2025megafusion,
  title={Megafusion: Extend diffusion models towards higher-resolution image generation without further tuning},
  author={Wu, Haoning and Shen, Shaocheng and Hu, Qiang and Zhang, Xiaoyun and Zhang, Ya and Wang, Yanfeng},
  booktitle={2025 IEEE/CVF Winter Conference on Applications of Computer Vision (WACV)},
  pages={3944--3953},
  year={2025},
  organization={IEEE}
}

@article{liu2025boosting,
  title={Boosting Resolution Generalization of Diffusion Transformers with Randomized Positional Encodings},
  author={Liu, Cong and Hou, Liang and Zheng, Mingwu and Tao, Xin and Wan, Pengfei and Zhang, Di and Gai, Kun},
  journal={arXiv preprint arXiv:2503.18719},
  year={2025}
}

@article{zhang2025ledit,
  title={LEDiT: Your Length-Extrapolatable Diffusion Transformer without Positional Encoding},
  author={Zhang, Shen and Liang, Siyuan and Tan, Yaning and Chen, Zhaowei and Li, Linze and Wu, Ge and Chen, Yuhao and Li, Shuheng and Zhao, Zhenyu and Chen, Caihua and others},
  journal={arXiv preprint arXiv:2503.04344},
  year={2025}
}

@article{qiu2025cinescale,
  title={CineScale: Free Lunch in High-Resolution Cinematic Visual Generation},
  author={Qiu, Haonan and Yu, Ning and Huang, Ziqi and Debevec, Paul and Liu, Ziwei},
  journal={arXiv preprint arXiv:2508.15774},
  year={2025}
}

@article{yang2025rectifiedhr,
  title={Rectifiedhr: Enable efficient high-resolution image generation via energy rectification},
  author={Yang, Zhen and Shen, Guibao and Hou, Liang and Liu, Mushui and Wang, Luozhou and Tao, Xin and Wan, Pengfei and Zhang, Di and Chen, Ying-Cong},
  journal={arXiv e-prints},
  pages={arXiv--2503},
  year={2025}
}

@article{liu2024clear,
  title={Clear: Conv-like linearization revs pre-trained diffusion transformers up},
  author={Liu, Songhua and Tan, Zhenxiong and Wang, Xinchao},
  journal={arXiv preprint arXiv:2412.16112},
  year={2024}
}

@inproceedings{tang2022patch,
  title={Patch slimming for efficient vision transformers},
  author={Tang, Yehui and Han, Kai and Wang, Yunhe and Xu, Chang and Guo, Jianyuan and Xu, Chao and Tao, Dacheng},
  booktitle={Proceedings of the IEEE/CVF Conference on Computer Vision and Pattern Recognition},
  pages={12165--12174},
  year={2022}
}

@article{bai2025positional,
  title={Positional Encoding Field},
  author={Bai, Yunpeng and Li, Haoxiang and Huang, Qixing},
  journal={arXiv preprint arXiv:2510.20385},
  year={2025}
}

@article{shen2025mmdit,
  title={E-MMDiT: Revisiting Multimodal Diffusion Transformer Design for Fast Image Synthesis under Limited Resources},
  author={Shen, Tong and Yu, Jingai and Zhou, Dong and Li, Dong and Barsoum, Emad},
  journal={arXiv preprint arXiv:2510.27135},
  year={2025}
}

@article{fu2024moa,
  title={Moa: Mixture of sparse attention for automatic large language model compression},
  author={Zhang, Genghan and Fu, Tianyu and Huang, Haofeng and Ning, Xuefei and  and Chen, Boju and Wu, Tianqi and Wang, Hongyi and Huang, Zixiao and Li, Shiyao and Yan, Shengen and others},
  journal={arXiv preprint arXiv:2406.14909},
  year={2024}
}

@article{zheng2022savit,
  title={Savit: Structure-aware vision transformer pruning via collaborative optimization},
  author={Zheng, Chuanyang and Zhang, Kai and Yang, Zhi and Tan, Wenming and Xiao, Jun and Ren, Ye and Pu, Shiliang and others},
  journal={Advances in Neural Information Processing Systems},
  volume={35},
  pages={9010--9023},
  year={2022}
}

@inproceedings{
leroy2024winwin,
title={Win-Win: Training High-Resolution Vision Transformers from Two Windows},
author={Vincent Leroy and Jerome Revaud and Thomas Lucas and Philippe Weinzaepfel},
booktitle={The Twelfth International Conference on Learning Representations},
year={2024},
url={https://openreview.net/forum?id=N23A4ybMJr}
}

@article{yuan2024ditfastattn,
  title={Ditfastattn: Attention compression for diffusion transformer models},
  author={Yuan, Zhihang and Zhang, Hanling and Pu, Lu and Ning, Xuefei and Zhang, Linfeng and Zhao, Tianchen and Yan, Shengen and Dai, Guohao and Wang, Yu},
  journal={Advances in Neural Information Processing Systems},
  volume={37},
  pages={1196--1219},
  year={2024}
}

@article{cao2024ap,
  title={Ap-ldm: Attentive and progressive latent diffusion model for training-free high-resolution image generation},
  author={Cao, Boyuan and Ye, Jiaxin and Wei, Yujie and Shan, Hongming},
  journal={arXiv preprint arXiv:2410.06055},
  year={2024}
}

@inproceedings{
ye2025flashinfer,
title={FlashInfer: Efficient and Customizable Attention Engine for {LLM} Inference Serving},
author={Zihao Ye and Lequn Chen and Ruihang Lai and Wuwei Lin and Yineng Zhang and Stephanie Wang and Tianqi Chen and Baris Kasikci and Vinod Grover and Arvind Krishnamurthy and Luis Ceze},
booktitle={Eighth Conference on Machine Learning and Systems},
year={2025},
url={https://openreview.net/forum?id=RXPofAsL8F}
}

@inproceedings{li2025longdiff,
  title={LongDiff: Training-Free Long Video Generation in One Go},
  author={Li, Zhuoling and Rahmani, Hossein and Ke, Qiuhong and Liu, Jun},
  booktitle={Proceedings of the Computer Vision and Pattern Recognition Conference},
  pages={17789--17798},
  year={2025}
}

@article{jin2023training,
  title={Training-free diffusion model adaptation for variable-sized text-to-image synthesis},
  author={Jin, Zhiyu and Shen, Xuli and Li, Bin and Xue, Xiangyang},
  journal={Advances in Neural Information Processing Systems},
  volume={36},
  pages={70847--70860},
  year={2023}
}

@article{lai2026pixelrush,
  title={PixelRush: Ultra-Fast, Training-Free High-Resolution Image Generation via One-step Diffusion},
  author={Lai, Hong-Phuc and Nguyen, Phong and Tran, Anh},
  journal={arXiv preprint arXiv:2602.12769},
  year={2026}
}

@inproceedings{vontobel2025hiwave,
  title={HiWave: Training-Free High-Resolution Image Generation via Wavelet-Based Diffusion Sampling},
  author={Vontobel, Tobias and Sadat, Seyedmorteza and Salehi, Farnood and Weber, Romann},
  booktitle={Proceedings of the SIGGRAPH Asia 2025 Conference Papers},
  pages={1--11},
  year={2025}
}

@article{zhao2025ultraimage,
  title={UltraImage: Rethinking Resolution Extrapolation in Image Diffusion Transformers},
  author={Zhao, Min and Yan, Bokai and Yang, Xue and Zhu, Hongzhou and Zhang, Jintao and Liu, Shilong and Li, Chongxuan and Zhu, Jun},
  journal={arXiv preprint arXiv:2512.04504},
  year={2025}
}

@article{koh2025scalediff,
  title={ScaleDiff: Higher-Resolution Image Synthesis via Efficient and Model-Agnostic Diffusion},
  author={Koh, Sungho and Cha, SeungJu and Oh, Hyunwoo and Lee, Kwanyoung and Kim, Dong-Jin},
  journal={arXiv preprint arXiv:2510.25818},
  year={2025}
}

@article{issachar2025dype,
  title={DyPE: Dynamic Position Extrapolation for Ultra High Resolution Diffusion},
  author={Issachar, Noam and Yariv, Guy and Benaim, Sagie and Adi, Yossi and Lischinski, Dani and Fattal, Raanan},
  journal={arXiv preprint arXiv:2510.20766},
  year={2025}
}

@inproceedings{
jin2024llm,
title={{LLM} Maybe Long{LM}: SelfExtend {LLM} Context Window Without Tuning},
author={Hongye Jin and Xiaotian Han and Jingfeng Yang and Zhimeng Jiang and Zirui Liu and Chia-Yuan Chang and Huiyuan Chen and Xia Hu},
booktitle={Forty-first International Conference on Machine Learning},
year={2024},
url={https://openreview.net/forum?id=nkOMLBIiI7}
}

\clearpage
\appendix

\suppressfloats[t]
\begin{center}
  {\Large\bfseries HRDiT: Training-Free High-Resolution Image Generation
   with Off-the-Shelf Diffusion Transformer Models\par}
  \medskip
  {\Large\bfseries (Supplementary Material)\par}
\end{center}
\vspace{1.5em}

\section{More Qualitative Results}

In this section, we present additional qualitative results. Specifically, Fig.~\ref{fig:4k_comparison} provides enlarged versions of the 4K qualitative results from Fig.~4 in the main paper for improved visibility. Fig.~\ref{fig:2k_comparison} and Fig.~\ref{fig:8k_comparison} further present 2K and 8K qualitative comparisons, respectively. As illustrated, across all high-resolution settings, images generated by existing methods consistently exhibit spatial disorder (highlighted by red boxes).
In contrast, images generated by our framework consistently preserve strong spatial coherence. This further demonstrates the effectiveness of our framework.

\begin{figure}[t]
\centering
\includegraphics[width=\linewidth]{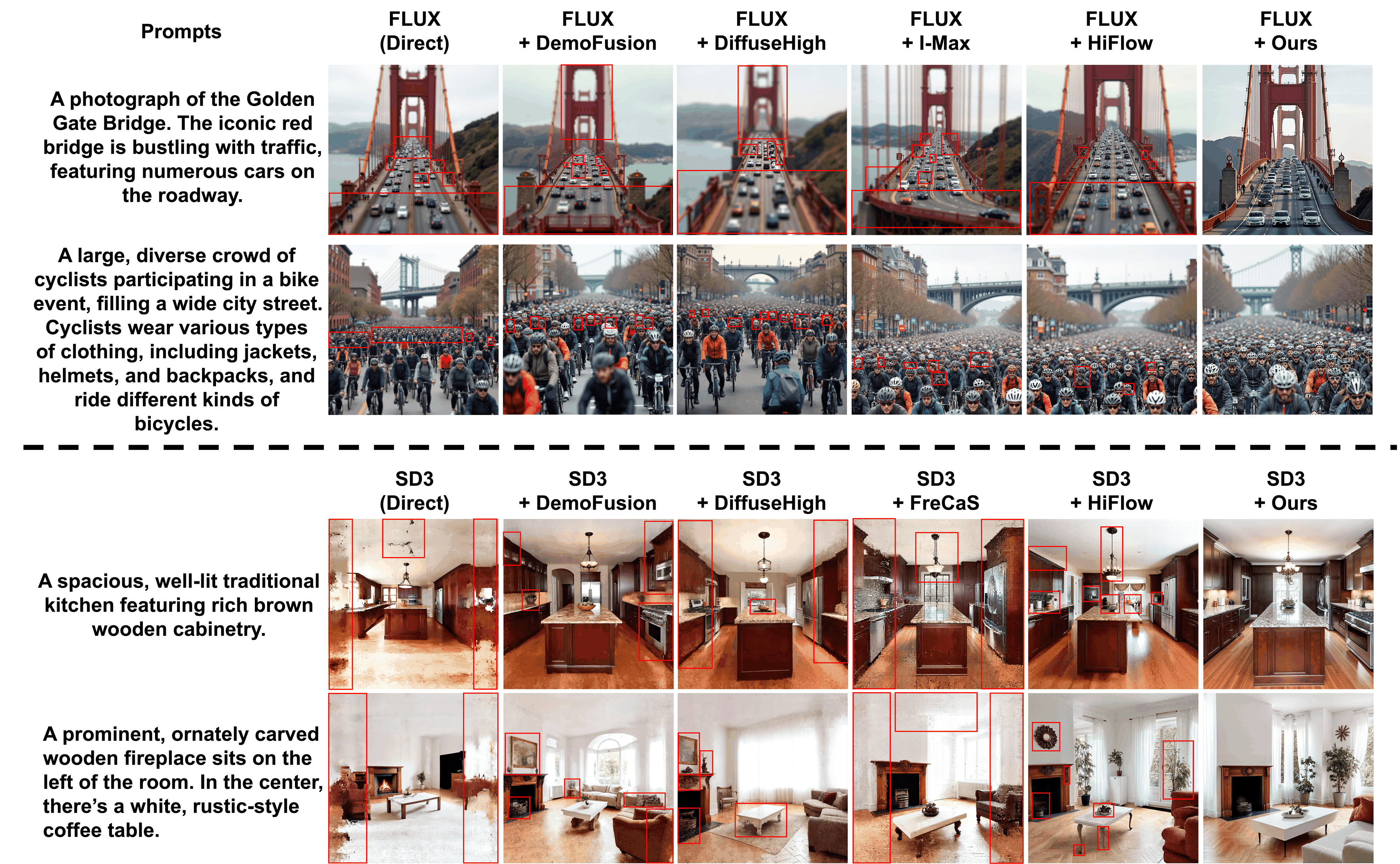}
\caption{Qualitative results of the 2K images generated by our method and by other state-of-the-art methods when applied under the same DiT backbone.}
\label{fig:2k_comparison}
\end{figure}

\begin{figure}[t]
\centering
\includegraphics[width=\linewidth]{sec/figures/figure4_qualitative_pics_flux.pdf}
\caption{Qualitative results of the 4K images generated by our method and by other state-of-the-art methods when applied under the same DiT backbone.}
\label{fig:4k_comparison}
\end{figure}

\begin{figure}[t]
\centering
\includegraphics[width=\linewidth]{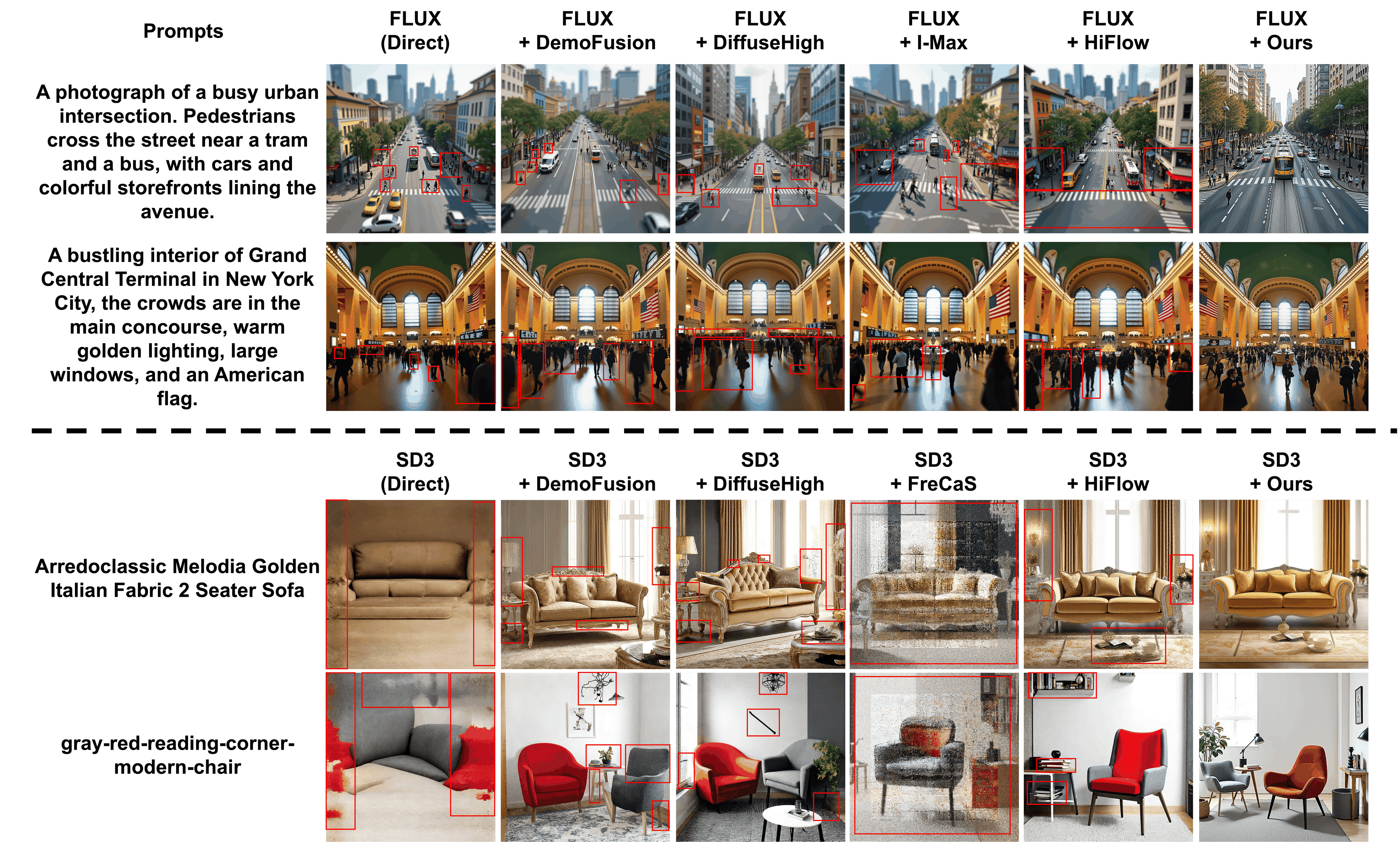}
\caption{Qualitative results of the 8K images generated by our method and by other state-of-the-art methods when applied under the same DiT backbone.}
\label{fig:8k_comparison}
\end{figure}

\section{Additional Experimental Settings and Implementation Details}
\label{sec:appendix_more_details}

In our experiments, we use the FLUX.1 dev model for FLUX~\cite{blackforestlabs2024flux} and the Stable Diffusion 3 Medium model for Stable Diffusion 3~\cite{sd3}. 
For evaluation, when computing the Fréchet Inception Distance (FID), the Kernel Inception Distance (KID), and their patch-based variants ($\text{FID}_{p}$ and $\text{KID}_{p}$), we follow~\cite{kim2025diffusehigh} and compute all four metrics with the help of 10,000 real images sampled from LAION-5B~\cite{schuhmann2022laion}. 
During computing $\text{FID}_{p}$ and $\text{KID}_{p}$, we further follow~\cite{lin2024accdiffusion} and randomly crop ten local patches from each generated image. 
In addition, in the main paper, we present the bundle and slide operations in 1D for clarity. In practice, for images whose latent tokens lie on a 2D grid, we apply the 1D bundle and slide operations independently along the two spatial dimensions. For each dimension, we set $N = 3$ for 2K image generation and $N = 5$ for 4K image generation.

During the estimation of $I_q$ in the HAP component, (1) we stabilize the process by averaging $I_q$ over 30 text prompts. 
These 30 prompts are randomly sampled from the LAION-5B dataset~\cite{schuhmann2022laion}, while explicitly excluding any prompts used during evaluation.
(2) When determining the appropriate collective scope assignment for a given testing resolution, we compute $L$ using the output from the final denoising step at that resolution. 

Besides, following prior work~\cite{kim2025diffusehigh, Imax, hiflow}, we adopt a cascaded generation strategy for synthesizing images at 1K ($1024 \times 1024$), 2K ($2048 \times 2048$), 4K ($4096 \times 4096$), and 8K ($8192 \times 8192$) resolutions. For FLUX, the sampling steps at these resolutions are 30, 16, 10, and 8, respectively; for Stable Diffusion~3, the corresponding steps are 28, 8, 6, and 5.
Following prior work~\cite{hiflow}, we further apply a commonly used technique: balancing the entropy shift of self-attention~\cite{jin2023training}. For a fair comparison, all baseline methods in our experiments adopt the same cascaded generation strategy and incorporate this technique.

\section{Additional Ablation Studies}

We here conduct more ablation experiments on the 4K image generation task using FLUX as the generation model.

\noindent\textbf{More experiments on the SPA component.} In our framework, inspired by \cite{li2025longdiff}, we adopt the SPA component to address the \textit{spatial disorder} challenge. To validate it in our context, we examine three variants. In the first variant (\textbf{w/o whole SPA}), we remove the entire SPA component from our framework. 
In the second variant (\textbf{down-scale}), instead of handling spatial disorder through SPA, we downscale token indices to match the token-index range corresponding to the model’s training resolution. Specifically, letting $T$ denote the total number of tokens at the testing resolution and $T_{\text{train}}$ the total number of tokens at the training resolution of the off-the-shelf DiT model, in this variant, we use $f_{pe}(i \times \frac{T_{\text{train}}}{T}, j \times \frac{T_{\text{train}}}{T})$ in replacement of $f_{pe}(i, j)$. 
In the third variant (\textbf{w/o slide operation}), we apply only the bundle operation of SPA and discard the slide operation. As shown in \cref{table: ablation_oper_in_spa}, both the first variant and the third variant underperform our full method, demonstrating the importance of both operations in the SPA component. Moreover, the inferior performance of the second variant implies that, merely re-fitting token indices into the pre-training token index range does not effectively handle spatial disorder. This may be because downscaling does not reduce $|S_{pe}|$. Hence, it can fail to facilitate satisfying $h(T) = |S_{pe}|$, which, according to the analysis associated with Proposition~1 in the main paper, is crucial for avoiding spatial disorder in the generated image.

\begin{table}[h]
  \caption{More experiments on the SPA component.}
  \centering
  \resizebox{0.65\linewidth}{!}{
  \begin{tabular}{lcccccc}
    \toprule
    Method & FID $\downarrow$ & FID$_\text{p}\downarrow$ & KID $\downarrow$ & KID$_\text{p}\downarrow$ & CLIP Score $\uparrow$ & Latency(s) $\downarrow$ \\
    \hline
    w/o whole SPA &75.85 &72.83 &0.0195 &0.0192 &29.55  &110 \\
    Down-scale &74.87 &71.68 &0.0186 &0.0174 &29.60 &111 \\
    w/o slide operation &69.43 &63.08 &0.0130 &0.0129 &30.52 &111 \\
    Ours &64.91 &51.84 &0.0078 &0.0074 &31.17 &116 \\
    \hline
  \end{tabular}
  }
  \label{table: ablation_oper_in_spa}
\end{table}

\noindent\textbf{More experiments on the HAP component.} Inspired by \cite{fu2024moa}, we also integrate an HAP component into our framework to address the \textit{long generation time} problem by effectively pruning redundant multi-head attention computations. To validate the efficacy of this design in our context, we examine four variants. 
In the first variant (\textbf{w/o whole HAP}), we remove the entire HAP component from our framework. 
In the second variant (\textbf{fixed local window}), instead of applying HAP, we assign a fixed local attention window to every head. For a fair comparison, we also set $r_c = 0.1$ in this variant. 
In the third variant (\textbf{random window}), we randomly assign each head an attention window such that the total computational cost of all multi-head attention operations, measured relative to that under full attention, is kept at $0.1$ (i.e., $r_c = 0.1$). 
In the fourth variant (\textbf{intermediate-feature-losses-guided window}), when determining the attention scope (window) for each head, we follow \cite{zhang2025ditfastattnv2} and estimate $I_q(n_{\text{head}}, n_{\text{scope}})$ using intermediate-feature losses instead of based on final-generation quality degradation. We also set $r_c = 0.1$ in this variant. 
As shown in \cref{table: ablation_attn_variants}, although the second, third, and fourth variants, as well as our framework, all reduce generation time compared to the first variant, all three variants suffer from noticeable quality degradation. 
In contrast, our framework equipped with the HAP component, after performing the one-time preparatory stage, achieves substantial efficiency gains during per-input generation while preserving high generation quality. This demonstrates the effectiveness of the HAP component in our context.

\begin{table}[h]
  \caption{More experiments on the HAP component.}
  \centering
  \resizebox{0.8\linewidth}{!}{
  \begin{tabular}{lcccccc}
    \toprule
    Method & FID $\downarrow$ & FID$_\text{p}\downarrow$ & KID $\downarrow$ & KID$_\text{p}\downarrow$ & CLIP Score $\uparrow$ & Latency(s) $\downarrow$ \\
    \hline
    w/o whole HAP &64.82 &51.81 &0.0076 &0.0074 &31.19 &212 \\
    \hline
    Fixed local window &69.47 &60.77 &0.0116 &0.0101 &30.83 &115\\
    Random window &73.45 &70.90 &0.0168 &0.0162 &29.79 &117 \\
    Intermediate-feature-losses-guided window &68.02 &58.21 &0.0104 &0.0093 &30.92 &116 \\
    Ours &64.91 &51.84 &0.0078 &0.0074 &31.17 &116 \\
    \bottomrule
  \end{tabular}
  }
  \label{table: ablation_attn_variants}
\end{table}

\noindent\textbf{User study.} Besides relying on quantitative evaluation metrics, we also conduct a user study to assess the generated high-resolution images based on human subjective judgment. In this study, five participants were each shown 4K images generated by all compared methods using FLUX as the underlying generation model. For each method, we produced 100 images using the same set of 100 text prompts randomly sampled from the LAION-5B dataset~\cite{schuhmann2022laion}. All images from different methods were displayed in a random order to avoid potential bias. Each participant was then asked to provide two ratings for every image, each on a 1–5 scale, evaluating prompt-image alignment and visual fidelity, respectively. The average scores for each method are summarized in \cref{tab:user_study}. As shown, our method obtains the highest ratings across both evaluation criteria.

\begin{table}[h]
\centering
\caption{Comparison based on user study.}
\resizebox{0.55\linewidth}{!}{
\begin{tabular}{lccc}
    \toprule
    Method & Prompt-Image Alignment & Visual fidelity \\
    \midrule
    FLUX~\cite{blackforestlabs2024flux} (\textit{Direct}) &2.2 & 1.8 \\
    FLUX~\cite{blackforestlabs2024flux} + DemoFusion~\cite{du2024demofusion} &3.2 & 2.4 \\
    FLUX~\cite{blackforestlabs2024flux} + DiffuseHigh~\cite{kim2025diffusehigh} &3.6 &3.7 \\
    FLUX~\cite{blackforestlabs2024flux} + I-Max~\cite{Imax} &3.3 & 2.6 \\
    FLUX~\cite{blackforestlabs2024flux} + HiFlow~\cite{hiflow} &3.8 & 3.7 \\
    \hline
    FLUX~\cite{blackforestlabs2024flux} + Ours &4.7 &4.8 \\
    \bottomrule
\end{tabular}}
\label{tab:user_study}
\end{table}

\noindent\textbf{Preparation time.} For testing at the high resolution, before testing, our framework first performs a one-time preparatory stage, which consists of two steps, to determine an appropriate collective scope assignment plan across different attention heads. We here report the runtime of this preparatory stage for the 4K image generation task using FLUX as the generation model, evaluated on an H200 GPU. Specifically, in this setting, the first step of the preparatory stage takes approximately 45 minutes, while the second step requires around 3 minutes, resulting in a total preparatory time of about 48 minutes (see Tab.~\ref{Tab:t_pre}). Importantly, this one-time process enables our framework to substantially reduce the per-input generation time, as shown in Tab.~2 of the main paper. We also note that the preparatory time becomes even shorter when using the more lightweight Stable Diffusion~3 model or when generating 2K images, which naturally require less computation than 4K. Also note that, the entire preparatory stage of our framework can be executed automatically via a single script.

\begin{table}[h]
  \caption{Preparation time of our framework. Note that the whole preparatory stage can be automatically conducted with a script.}
  \centering
  
  \resizebox{0.3\linewidth}{!}{
  \begin{tabular}{l|c}
    \hline
    & Time \\
    \hline
    Preparatory step 1 & $\sim$45 min\\
    Preparatory step 2 & $\sim$3 min\\
    Whole preparatory stage & $\sim$48 min\\
    \hline
  \end{tabular}
  }
  \label{Tab:t_pre}
\end{table}

\noindent\textbf{Impact of the number of candidate scopes $N_{\text{scope}}$.} In our framework, during performing the preparatory stage, we set the number of candidate scopes $N_{\text{scope}}$ to be 50. Here we evaluate other choices of $N_{\text{scope}}$, and report the results in \cref{table: ablation_diff_candidates}. 
As shown, starting from $N_{\text{scope}} = 10$, the generation quality improves as $N_{\text{scope}}$ increases, and then stabilizes at 50.
Therefore, we set the number of candidate scopes to be 50 in our experiments. 

\begin{table}[h]
  \caption{Evaluation on the number of candidate scopes $N_{\text{scope}}$.}
  \centering
  
  \resizebox{0.65\linewidth}{!}{
  \begin{tabular}{lccccccc}
    \toprule
    Method & FID $\downarrow$ & FID$_\text{p}\downarrow$ & KID $\downarrow$ & KID$_\text{p}\downarrow$ & CLIP Score $\uparrow$ & Latency(s) $\downarrow$\\
    \hline
     $N_{\text{scope}} = 10$ &66.03 &55.72 &0.0093 &0.0084 &31.01 &115 \\
     $N_{\text{scope}} = 30$ &65.17 &53.29 &0.0082 &0.0079 &31.13 &115 \\
     $N_{\text{scope}} = 50$ &64.91 &51.84 &0.0078 &0.0074 &31.17 &116 \\
     $N_{\text{scope}} = 70$ &64.89 &51.84 &0.0078 &0.0074 &31.18 &116 \\
     $N_{\text{scope}} = 90$ &64.88 &51.83 &0.0078 &0.0074 &31.18 &116 \\
    \hline
  \end{tabular}
  }
  \label{table: ablation_diff_candidates}
\end{table}

\noindent\textbf{Impact of the number of tokens in each middle bundle $N$.} In the SPA component of our framework, we set the number of tokens in each middle bundle $N$ to be 5 during 4K image generation. Here we also assess other choices of $N$ from 3 to 7. As shown in \cref{table: ablation_diff_bundle_size}, our framework gets optimal generation quality when $N$ is set to 5 or 6, and $N = 5$ is used in our 4K image generation experiments. 

\begin{table}[h]
  \caption{Evaluation on the number of tokens in each middle bundle $N$.}
  \centering
  \resizebox{0.6\linewidth}{!}{
  \begin{tabular}{lccccccc}
    \toprule
    Method & FID $\downarrow$ & FID$_\text{p}\downarrow$ & KID $\downarrow$ & KID$_\text{p}\downarrow$ & CLIP Score $\uparrow$ & Latency(s) $\downarrow$ \\
    \hline
     $N = 3$ &65.87 &52.88 &0.0090 &0.0082 &31.07 &114 \\
     $N = 4$ &65.09 &52.26 &0.0083 &0.0076 &31.11 &115 \\
     $N = 5$ &64.91 &51.84 &0.0078 &0.0074 &31.17 &116 \\
     $N = 6$ &64.92 &51.82 &0.0081 &0.0073 &31.15 &117 \\
     $N = 7$ &65.04 &52.29 &0.0082 &0.0075 &31.09 &117 \\
    \hline
  \end{tabular}
  }
  \label{table: ablation_diff_bundle_size}
\end{table}

\noindent\textbf{Impact of the ratio $r_c$.} In our framework, we set $r_c$, denoting the target ratio of total computational cost for all multi-head attention operations measured relative to that under full attention, to be 0.1. Here to evaluate the impact of the ratio $r_c$, we evaluate different choices of $r_c$ ranging from 0.01 to 0.5, and report the results in \cref{table: ablation_diff_rc}. As shown, when $r_c$ exceeds 0.1, the generation quality improvement is trivial. Thus, taking the generation efficiency also into consideration, we set $r_c$ to 0.1 in our experiments.

\begin{table}[h]
  \caption{Evaluation on the different $r_c$.}
  \centering
  \resizebox{0.6\linewidth}{!}{
  \begin{tabular}{lcccccc}
    \toprule
    Method & FID $\downarrow$ & FID$_\text{p}\downarrow$ & KID $\downarrow$ & KID$_\text{p}\downarrow$ & CLIP Score $\uparrow$ & Latency(s) $\downarrow$ \\ 
    \hline
     $r_c = 0.01$ &94.51 &91.34 &0.0383 &0.0379 &27.39 &107 \\ 
     $r_c = 0.05$ &75.66 &67.49 &0.0197 &0.0181 &30.07 &111 \\ 
     $r_c = 0.1$ &64.91 &51.84 &0.0078 &0.0074 &31.17 &116 \\ 
     $r_c = 0.3$ &64.90 &51.84 &0.0078 &0.0074 &31.18 &134 \\ 
     $r_c = 0.5$ &64.88 &51.83 &0.0077 &0.0074 &31.18 &155 \\
     
    \hline
  \end{tabular}
  }
  \label{table: ablation_diff_rc}
\end{table}

\noindent\textbf{Impact of the number of input prompts used in preparatory step 1.} In preparatory step 1 of our framework, to stabilize the estimation, we average $I_q$ over 30 input prompts. Here, we also test other choices of the number of input prompts used in preparatory step 1 and report the results in \cref{table:ablation_num_prompts}. As shown, the generation quality improves when we increase the number of input prompts used in preparatory step 1 from 10 to 30, and the improvement of generation quality becomes minimal when the number is further increased. Hence, taking the time taken during the preparatory stage also into consideration, we average $I_q$ over 30 input prompts during preparatory step 1.

\begin{table}[h]
  \caption{Evaluation on the number of input prompts used in preparatory step 1.}
  \centering
  \resizebox{0.8\linewidth}{!}{
  \begin{tabular}{lccccccc}
    \hline
    Method & FID $\downarrow$ & FID$_\text{p}\downarrow$ & KID $\downarrow$ & KID$_\text{p}\downarrow$ & CLIP Score $\uparrow$ & Latency(s) $\downarrow$ & Preparation Time(min) $\downarrow$ \\
    \hline
    10 input prompts &66.13 &54.68 &0.0092 &0.0081 &31.02 &116 &18 \\
    20 input prompts &65.02 &52.15 &0.0081 &0.0077 &31.14 &116 &33 \\
    30 input prompts &64.91 &51.84 &0.0078 &0.0074 &31.17 &116 &48 \\
    40 input prompts &64.91 &51.80 &0.0077 &0.0072 &31.17 &116 &63 \\
    50 input prompts &64.90 &51.79 &0.0077 &0.0071 &31.17 &116 &78 \\
    \hline
  \end{tabular}
  }
  \label{table:ablation_num_prompts}
\end{table}

\noindent\textbf{Impact of performing the one-time preparatory stage separately for each testing resolution.} In our framework (\textbf{preparation once per testing resolution}), we perform the one-time preparatory stage separately for each testing resolution. For comparison, we also evaluate a variant (\textbf{preparation once for all testing resolutions}), in which the preparatory stage is executed only once at the 1K resolution and the resulting collective scope assignment is then reused for all testing resolutions. As shown in Tab.~\ref{table: diff_res_validset}, this variant also outperforms the state-of-the-art method HiFlow~\cite{hiflow}.

\begin{table}[h]
  \caption{Evaluation on performing the one-time preparatory stage separately for each testing resolution.}
  \centering
  \resizebox{0.7\linewidth}{!}{
  \begin{tabular}{lcccccc}
    \toprule
    Method & FID $\downarrow$ & FID$_\text{p}\downarrow$ & KID $\downarrow$ & KID$_\text{p}\downarrow$ & CLIP Score $\uparrow$ & Latency(s) $\downarrow$ \\
    \hline
    HiFlow~\cite{hiflow} &68.36 &57.43 &0.0124 &0.0114 &30.62 &209 \\
    \hline
    Preparation once for all testing resolutions &66.09 &53.11 &0.0087 &0.0079 &31.04 &116 \\
    Preparation once per testing resolution &64.91 &51.84 &0.0078 &0.0074 &31.17 &116 \\
    \bottomrule
  \end{tabular}
  }
  \label{table: diff_res_validset}
\end{table}

\noindent\textbf{Impact of determining the collective scope assignment plan from the final denoising step.} In our framework (\textbf{single plan determined using only the final denoising step}), when determining the collective scope assignment for a given testing resolution, we estimate $I_q$ using only the output from the final denoising step at that resolution. We also evaluate two variants. In the first variant (\textbf{single plan determined using all denoising steps}), we compute $I_q$ at each denoising step separately and then average these values to obtain the final $I_q$, rather than relying solely on the final step. In the second variant (\textbf{separate plan per denoising step}), we determine an independent collective scope assignment for each denoising step, using the $I_q$ estimated at that step. As shown in Tab.~\ref{table: scope_final}, both variants achieve performance comparable to our framework. This suggests that the inherent attention scopes of different heads in the multi-head attention of off-the-shelf DiT models, when extended to high resolutions, tend to remain stable across denoising steps. For simplicity, we therefore determine the collective scope assignment for each testing resolution using only the final denoising step at that resolution.

\begin{table}[h]
  \caption{Evaluation on determining the collective scope assignment plan from the final denoising step.}
  \centering
  \resizebox{0.8\linewidth}{!}{
  \begin{tabular}{lcccccc}
    \toprule
    Method & FID $\downarrow$ & FID$_\text{p}\downarrow$ & KID $\downarrow$ & KID$_\text{p}\downarrow$ & CLIP Score $\uparrow$ & Latency(s) $\downarrow$ \\
    \hline
    HiFlow~\cite{hiflow} &68.36 &57.43 &0.0124 &0.0114 &30.62 &209 \\
    \hline
    Single plan determined using all denoising steps &64.91 &51.84 &0.0077 &0.0073 &31.17 &116 \\
    Separate plan per denoising step &64.90 &51.84 &0.0077 &0.0073 &31.18 &117 \\
    Single plan determined using only the final denoising step &64.91 &51.84 &0.0078 &0.0074 &31.17 &116 \\
    \bottomrule
  \end{tabular}
  }
  \label{table: scope_final}
\end{table}

\noindent\textbf{Impact of using text prompts randomly sampled from LAION-5B during estimating $I_q$.} In our framework (\textbf{Prompts from LAION-5B during estimating $I_q$}), during the estimation of $I_q$, we stabilize the process by averaging $I_q$ over 30 text prompts randomly sampled from the LAION-5B dataset~\cite{schuhmann2022laion}, while explicitly excluding any prompts used during evaluation. Here, we test two variants in which, during the estimation of $I_q$, we use prompts generated by ChatGPT instead of sampling them from LAION-5B. In the first variant (\textbf{Prompt generated by ChatGPT during estimating $I_q$ (first generation strategy)}), we simply ask ChatGPT to generate 30 text prompts without referencing the prompts used during evaluation. In the second variant (\textbf{Prompt generated by ChatGPT during estimating $I_q$ (second generation strategy)}), we again generate 30 prompts using ChatGPT, but this time we provide ChatGPT with all 1,000 evaluation prompts and explicitly instruct it to produce 30 diverse prompts that avoid generating any that are similar to these evaluation prompts. As shown in Tab.~\ref{table:prompts_choose}, both variants achieve performance comparable to our framework. This demonstrates that the estimation of $I_q$ is robust to the source of the text prompts used.

\begin{table}[h]
  \caption{Evaluation on using text prompts randomly sampled from LAION-5B during estimating $I_q$.}
  \centering
  \resizebox{0.9\linewidth}{!}{
  \begin{tabular}{lcccccc}
    \toprule
    Method & FID $\downarrow$ & FID$_\text{p}\downarrow$ & KID $\downarrow$ & KID$_\text{p}\downarrow$ & CLIP Score $\uparrow$ & Latency(s) $\downarrow$ \\
    \hline
    Prompt generated by ChatGPT during estimating $I_q$ (first generation strategy) &64.92 &51.84 &0.0079 &0.0075 &31.17 &116 \\
    Prompt generated by ChatGPT during estimating $I_q$ (second generation strategy) &64.93 &51.82 &0.0080 &0.0074 &31.17 &116 \\
    Prompts from LAION-5B during estimating $I_q$ &64.91 &51.84 &0.0078 &0.0074 &31.17 &116 \\
    \bottomrule
  \end{tabular}
  }
  \label{table:prompts_choose}
\end{table}

\section{Additional Experiments}

\noindent\textbf{Comparison with image super-resolution method.} In the community, besides directly performing text-to-high-resolution generation, another possible approach for high-resolution image synthesis is to first generate limited-resolution images using an off-the-shelf DiT model and then upscale them using an image super-resolution method. For completeness, we also include this approach in our comparisons. However, it is important to note that image super-resolution methods are typically training-based: obtaining a high-performing super-resolution model generally requires large high-resolution training datasets and substantial computational resources. In contrast, our framework operates entirely in a training-free manner. On the 4K image generation task, following \cite{zhang2025frecas}, we compare our method with the recent powerful super-resolution model SUPIR \cite{yu2024scaling}. As shown in \cref{table:super_res}, our method consistently achieves both higher generation quality and faster generation speed than SUPIR across both Stable Diffusion 3 (SD3) and FLUX, even though SUPIR requires high-resolution training while our framework is training-free. This further demonstrates the effectiveness of our approach.

\begin{table}[h]
  \caption{Comparison between the image super-resolution method SUPIR and our approach. Notably, though SUPIR requires high-resolution training, our training-free approach still outperforms it, further demonstrating the effectiveness of our approach.}
  \centering
  \resizebox{0.8\linewidth}{!}{
  \begin{tabular}{c|lcccccc}
    \hline
    Resolution &
    Method & FID $\downarrow$ & FID$_\text{p}\downarrow$ & KID $\downarrow$ & KID$_\text{p}\downarrow$ & CLIP Score $\uparrow$ & Latency(s) $\downarrow$ \\
    \hline 
    \multirow{4}{*}{$\mathbf{2k}$} &SD3~\cite{sd3} + SUPIR~\cite{yu2024scaling} &68.49 &53.64 &0.0088 &0.0077 &31.75 &59 \\
    &SD3~\cite{sd3} + Ours &\textbf{67.90} &\textbf{52.59} &\textbf{0.0081} &\textbf{0.0073} &\textbf{31.81} &\textbf{11} \\
    \cline{2-8}
    & FLUX~\cite{blackforestlabs2024flux} + SUPIR~\cite{yu2024scaling} &65.25 &53.67 &0.0083 &0.0072 &31.22 &59 \\
    & FLUX~\cite{blackforestlabs2024flux} + Ours &\textbf{64.42} &\textbf{51.55} &\textbf{0.0074} &\textbf{0.0067} &\textbf{31.27} &\textbf{35} \\
    \hline
    \multirow{4}{*}{$\mathbf{4k}$} &SD3~\cite{sd3} + SUPIR~\cite{yu2024scaling} &69.33 &54.57 &0.0097 &0.0084 &31.70 &305 \\
    &SD3~\cite{sd3} + Ours &\textbf{68.63} &\textbf{53.04} &\textbf{0.0089} &\textbf{0.0079} &\textbf{31.79} &\textbf{58} \\
    \cline{2-8}
    & FLUX~\cite{blackforestlabs2024flux} + SUPIR~\cite{yu2024scaling} &65.89 &54.96 &0.0092 &0.0083 &31.04 &305 \\
    & FLUX~\cite{blackforestlabs2024flux} + Ours &\textbf{64.91} &\textbf{51.84} &\textbf{0.0078} &\textbf{0.0074} &\textbf{31.17} &\textbf{116} \\
    \hline
  \end{tabular}
  }
  \label{table:super_res}
\end{table}

\section{Additional Details about Eq. 1 in the Main Paper}
\label{sec:appendix_a}

Denote by $\mathbf{x}_i \in \mathbb{R}^{1 \times d_{\text{input}}}$ the $i$-th token in the input hidden state $\mathbf{x} \in \mathbb{R}^{T \times d_{\text{input}}}$ of an attention block, where $T = h_{\text{hidden}} w_{\text{hidden}}$, $h_{\text{hidden}} = h_{\text{image}} / 16$, and $w_{\text{hidden}} = w_{\text{image}} / 16$. Here, $h_{\text{image}}$ and $w_{\text{image}}$ respectively denote the height and width of the input image to the DiT model, and we assume both dimensions are multiples of 16, a typical requirement for DiT models. (For inputs that do not satisfy this condition, the image is typically resized or padded before being fed into the model). Let $\mathbf{q_i} \in \mathbb{R}^{1 \times d}$, $\mathbf{k_i} \in \mathbb{R}^{1 \times d}$, and $\mathbf{v_i} \in \mathbb{R}^{1 \times d}$ be the corresponding query, key, and value vectors derived from $\mathbf{x_i}$. Recall that in Eq.~1 of the main paper, we point out that, for mainstream off-the-shelf DiT models including FLUX~\cite{blackforestlabs2024flux} and Stable Diffusion~3~\cite{sd3}, their procedures for deriving $\mathbf{q_i}$, $\mathbf{k_i}$, and $\mathbf{v_i}$ from $\mathbf{x_i}$, though varying in detailed designs, can be unified using the following formulation through appropriately defined functions $g_{q}(\cdot,\cdot)$, $g_{k}(\cdot,\cdot)$, $g_{v}(\cdot,\cdot)$, and $f^{tok}_{pe}(\cdot)$:
\begin{equation}
\setlength{\abovedisplayskip}{3pt}
\setlength{\belowdisplayskip}{3pt}
\begin{aligned}
\label{Eq:1_restate}
\mathbf{q_i}=g_{q}(\mathbf{x_i},f^{tok}_{pe}(i)), \mathbf{k_i}=g_{k}(\mathbf{x_i},f^{tok}_{pe}(i)), \mathbf{v_i}=g_{v}(\mathbf{x_i},f^{tok}_{pe}(i)),
\end{aligned}
\end{equation}

In this section, our goal is to substantiate this claim by showing that the computational processes of both models can indeed be written in the form of Eq.~\ref{Eq:1_restate}. To this end, for each model, we explicitly construct the corresponding definitions of the four functions $g_{q}(\cdot,\cdot)$, $g_{k}(\cdot,\cdot)$, $g_{v}(\cdot,\cdot)$, and $f^{tok}_{pe}(\cdot)$ such that, once instantiated, Eq.~\ref{Eq:1_restate} precisely reproduces that model’s procedure for computing $\mathbf{q_i}$, $\mathbf{k_i}$, and $\mathbf{v_i}$ from $\mathbf{x_i}$.

\noindent\textbf{Instantiation for FLUX.} FLUX uses rotary position embeddings (RoPE)~\cite{pe1}, applied to both queries and keys. When applying RoPE, FLUX first maps the token index $i$ to its corresponding row and column indices, $i_r = i // w_{\text{hidden}}$ and $i_c = i \% w_{\text{hidden}}$. For $\mathbf{q}_i \in \mathbb{R}^{1 \times d}$ and $\mathbf{k}_i \in \mathbb{R}^{1 \times d}$, FLUX further decomposes the dimensionality as $d = d_{\text{inf}} + d_{\text{row}} + d_{\text{col}}$, where $d_{\text{inf}}$ is the dimension associated with information that does not depend on row or column indices, $d_{\text{row}}$ encodes the row index $i_r$, and $d_{\text{col}}$ encodes the column index $i_c$. By design, $d_{\text{inf}}$, $d_{\text{row}}$, $d_{\text{col}}$, and $d$ are all even numbers. With these notations, for each token index~$i$, the token-level positional signal $f^{tok}_{pe}(\cdot)$ of FLUX can be instantiated as:

\begin{equation}
\begin{aligned}
f^{tok}_{pe}(i) &= \{\theta_m(i)\}_{m=1}^{d/2}, \\
\text{where}\quad
\theta_m(i) &=
\begin{cases}
0, & 1 \le m \le \dfrac{d_{\text{inf}}}{2}, \\[6pt]
i_r \cdot \omega_m,
& \dfrac{d_{\text{inf}}}{2} < m \le \dfrac{d_{\text{inf}} + d_{\text{row}}}{2}, \\[6pt]
i_c \cdot \omega_m,
& \dfrac{d_{\text{inf}} + d_{\text{row}}}{2}
< m \le
\dfrac{d_{\text{inf}} + d_{\text{row}} + d_{\text{col}}}{2}
= \dfrac{d}{2}.
\end{cases}
\end{aligned}
\end{equation}
where 
$\omega_m = 10000^{-2(m-1)/d}$. Using this definition, the functions $g_{q}(\cdot,\cdot)$, $g_{k}(\cdot,\cdot)$, $g_{v}(\cdot,\cdot)$ for FLUX can be written as:
\begin{equation}
\begin{aligned}
g_q(\mathbf{x_i}, f^{tok}_{pe}(i))
&= \mathrm{RoPE}_{d}(\mathbf{x_i} W_q^{flux},\; f^{tok}_{pe}(i)), \\
g_k(\mathbf{x_i}, f^{tok}_{pe}(i))
&= \mathrm{RoPE}_{d}(\mathbf{x_i} W_k^{flux},\; f^{tok}_{pe}(i)), \\
g_v(\mathbf{x_i}, f^{tok}_{pe}(i))
&= \mathbf{x_i} W_v^{flux},
\end{aligned}
\label{Eq:1_flux}
\end{equation}
where $W_q^{flux}, W_k^{flux}, W_v^{flux} \in \mathbb{R}^{d_{\text{input}} \times d}$ are the learned linear projection matrices for queries, keys, and values, respectively.
Below, we elaborate on the operation of $\mathrm{RoPE}_{d}(\cdot, \cdot)$. Let $z = [z_1, z_2, \dots, z_{d/2}] \in \mathbb{R}^{1 \times d}$ denote the projected query or key vector (i.e., $z = \mathbf{x_i} W_q^{flux}$ or $z = \mathbf{x_i} W_k^{flux}$), where each $z_m \in \mathbb{R}^{1 \times 2}$. RoPE applies a rotation to each $z_m$ according to its positional angle $\theta_m(i)$: 
\begin{equation}
\begin{aligned}
\mathrm{RoPE}_{d}(z,f^{tok}_{pe}(i))
=
&\big[
(R(\theta_1(i))(z_1)^\intercal)^\intercal,\;
(R(\theta_2(i))(z_2)^\intercal)^\intercal, \\
&\quad
\dots,\;
(R(\theta_{d/2}(i))(z_{d/2})^\intercal)^\intercal
\big].
\end{aligned}
\end{equation}
where $R(\theta)=
\begin{bmatrix}
\cos\theta & -\sin\theta \\
\sin\theta & \cos\theta
\end{bmatrix}
$ is the standard $2\times2$ rotation matrix used in RoPE~\cite{pe1}.

The above shows that, for FLUX, we can explicitly instantiate $g_{q}(\cdot,\cdot)$, $g_{k}(\cdot,\cdot)$, $g_{v}(\cdot,\cdot)$, and $f^{tok}_{pe}(\cdot)$ such that, once instantiated, Eq.~\ref{Eq:1_restate} exactly reproduces FLUX's procedure for computing $\mathbf{q_i}$, $\mathbf{k_i}$, and $\mathbf{v_i}$ from $\mathbf{x_i}$. This demonstrates that FLUX fits precisely into the unified formulation of Eq.~\ref{Eq:1_restate}.

\noindent\textbf{Instantiation for Stable Diffusion 3.} Stable Diffusion~3 employs sinusoidal positional embeddings~\cite{pe2}, applied to queries, keys, and values. Notably, these positional embeddings are used only in the first attention block. In this block, leveraging the sinusoidal embeddings, the four functions for Stable Diffusion~3~\cite{sd3} are instantiated as:
\begin{equation}
\begin{aligned}
f^{tok}_{pe}(i) &= \mathbf{p_i}, \\
g_q(\mathbf{x_i}, f^{tok}_{pe}(i)) &= (\mathbf{x_i} + f^{tok}_{pe}(i)) W_q^{sd3}, \\
g_k(\mathbf{x_i}, f^{tok}_{pe}(i)) &= (\mathbf{x_i} + f^{tok}_{pe}(i)) W_k^{sd3}, \\
g_v(\mathbf{x_i}, f^{tok}_{pe}(i)) &= (\mathbf{x_i} + f^{tok}_{pe}(i)) W_v^{sd3},
\end{aligned}
\label{Eq:sd3}
\end{equation}
where $W_q^{sd3}, W_k^{sd3}, W_v^{sd3}$ are the learned linear projection matrices for queries, keys, and values, respectively. Meanwhile, for $d_{\text{input}}$ by design an even number, we have $\mathbf{p_i} = [\sin(\frac{i}{10000^{0/d_{\text{input}}}}), \cos(\frac{i}{10000^{0/d_{\text{input}}}}), \sin(\frac{i}{10000^{2/d_{\text{input}}}}), \cos(\frac{i}{10000^{2/d_{\text{input}}}}),$
$\dots, \sin(\frac{i}{10000^{(d_{\text{input}} - 2)/d_{\text{input}}}}), \cos(\frac{i}{10000^{(d_{\text{input}} - 2)/d_{\text{input}}}})] \in \mathbb{R}^{1 \times d_{\text{input}}}$. The above shows that, for Stable Diffusion 3, we can also explicitly instantiate its corresponding $g_{q}(\cdot,\cdot)$, $g_{k}(\cdot,\cdot)$, $g_{v}(\cdot,\cdot)$, and $f^{tok}_{pe}(\cdot)$, making Stable Diffusion 3 also fits precisely into the unified formulation of Eq.~\ref{Eq:1_restate}.

Taken together, the above analysis indicates that both FLUX and Stable Diffusion 3 conform exactly to the unified formulation specified in Eq.~1 in the main paper.

\section{Additional Details about Eq. 3 in the Main Paper}
\label{sec:appendix_b}

In Eq.~3 of the main paper, we further point out that for mainstream off-the-shelf DiT models, including FLUX~\cite{blackforestlabs2024flux} and Stable Diffusion~3~\cite{sd3}, by combining Eq.~1 and Eq.~2 in the main paper, the term $\mathbf{c}_{i,j}$ (the contribution of token $j$ to the attention output at position $i$) can be expressed in a unified form using a single function $g$:
\begin{equation}
\setlength{\abovedisplayskip}{3pt}
\setlength{\belowdisplayskip}{3pt}
\label{Eq:3_supp}
\mathbf{c}_{i,j} = g(\mathbf{x_i}, \mathbf{x_j}, f_{pe}(i, j)),
\end{equation}
where $f_{pe}(i, j)$ denotes the pairwise positional signal.

In this section, following the procedure in Sec.~\ref{sec:appendix_a}, we substantiate this claim by explicitly constructing the definitions of $f_{pe}(\cdot,\cdot)$ and $g(\cdot,\cdot,\cdot)$ for both FLUX and Stable Diffusion~3. Specifically, given (1) the instantiation of Eq.~1 in the main paper (as detailed in Sec.~\ref{sec:appendix_a}) and (2) Eq.~2 in the main paper, we show that by combining them, the corresponding forms of $f_{pe}(\cdot,\cdot)$ and $g(\cdot,\cdot,\cdot)$ for FLUX and Stable Diffusion~3 can be explicitly constructed as follows.

\noindent\textbf{Instantiation for FLUX.} For FLUX~\cite{blackforestlabs2024flux}, we follow the notations in Sec.~\ref{sec:appendix_a}. We additionally denote $z^{q}_{i} = \mathbf{x_i}W_q^{flux} =  [z^{q}_{i, 1}, z^{q}_{i, 2}, \dots, z^{q}_{i, d/2}] \in \mathbb{R}^{1 \times d}$ where $z^{q}_{i, m} \in \mathbb{R}^{1\times2}$, and denote $z^{k}_{j} = \mathbf{x_j}W_k^{flux} = [z^{k}_{j, 1}, z^{k}_{j, 2}, \dots, z^{k}_{j, d/2}] \in \mathbb{R}^{1 \times d}$ where $z^{k}_{j, m} \in \mathbb{R}^{1\times2}$. Moreover, we derive $j_r$ and $j_c$ from the token index $j$ in the same way as we derive $i_r$ and $i_c$ from the token index $i$. $f_{pe}(\cdot,\cdot)$ and $g(\cdot,\cdot,\cdot)$ of FLUX can then be instantiated as:

\begin{equation}
f_{pe}(i,j) = \{f_{pe}(i,j)[m]\}_{m=1}^{d/2}
\end{equation}

\begin{equation}
f_{pe}(i,j)[m] =
\begin{cases}
0, & 1 \le m \le \dfrac{d_{\text{inf}}}{2}, \\[6pt]
(i_r-j_r)\,\omega_m,
& \dfrac{d_{\text{inf}}}{2} < m \le
\dfrac{d_{\text{inf}} + d_{\text{row}}}{2}, \\[6pt]
(i_c-j_c)\,\omega_m,
& \dfrac{d_{\text{inf}} + d_{\text{row}}}{2}
< m \le
\dfrac{d_{\text{inf}} + d_{\text{row}} + d_{\text{col}}}{2}
= \dfrac{d}{2}.
\end{cases}
\end{equation}

\begin{equation}
\begin{aligned}
g(\mathbf{x_i},\mathbf{x_j},f_{pe}(i,j))
&=
\frac{
\exp\!\Big(
\sum_{m=1}^{d/2}
z^{q}_{i,m}\,
R\!\big(-f_{pe}(i,j)[m]\big)\,
(z^{k}_{j,m})^\intercal
\Big)
}
{
\sum_{t=1}^{T}
\exp\!\Big(
\sum_{m=1}^{d/2}
z^{q}_{i,m}\,
R\!\big(-f_{pe}(i,t)[m]\big)\,
(z^{k}_{t,m}\big)^\intercal
\Big)
}
\, W_v^{flux}\mathbf{x_j}.
\end{aligned}
\label{Eq:3_supp_flux}
\end{equation}

\noindent\textbf{Instantiation for Stable Diffusion 3.} Meanwhile, for Stable Diffusion~3~\cite{sd3}, we follow the notations in Sec.~\ref{sec:appendix_a} and additionally denote $A=(W_q^{sd3})^\intercal W_k^{sd3} \in \mathbb{R}^{d \times d}$. In the first attention block of Stable Diffusion~3 where positional embeddings are applied, $f_{pe}(\cdot,\cdot)$ and $g(\cdot,\cdot,\cdot)$ can then be instantiated as:

\begin{equation}
\begin{aligned}
f_{pe}(i,j) &= \{\mathbf{p_i},\mathbf{p_j}\}, \\[4pt]
s(i,j)
&=
\mathbf{x_i}^\intercal A\,\mathbf{x_j}
+
\mathbf{x_i}^\intercal A\, (f_{pe}(i,j)[1]) \\
&\quad
+
(f_{pe}(i,j)[0])^\intercal A\,\mathbf{x_j}
+
(f_{pe}(i,j)[0])^\intercal A\,(f_{pe}(i,j)[1]).
\end{aligned}
\end{equation}

\begin{equation}
g(\mathbf{x_i},\mathbf{x_j},f_{pe}(i,j))
=
\frac{\exp(s(i,j))}
{\sum_{t=1}^{T}\exp(s(i,t))}
\, W_v^{sd3}(\mathbf{x_j}+f_{pe}(i,j)[1]).
\label{Eq:3_supp_sd3}
\end{equation}

Taken together, Eq.~\ref{Eq:3_supp_flux} and Eq.~\ref{Eq:3_supp_sd3} show that, both FLUX and Stable Diffusion 3 conform exactly to the unified formulation specified in Eq.~3 in the main paper. 

\section{Additional Details about $|S_{pe}|$}

Based on the above instantiation of $f_{pe}(\cdot,\cdot)$ and $g(\cdot,\cdot,\cdot)$ respectively for FLUX and Stable Diffusion 3, we can derive the size of $S_{pe}$ (i.e., $|S_{pe}|$), for FLUX and Stable Diffusion 3 respectively. Specifically, recall that $|S_{pe}|$ denotes the size of $S_{pe} = \{\, f_{pe}(i,j) \mid i \in \{0,\dots,T-1\},~ j \in \{0,\dots,T-1\} \,\}$. For FLUX, following the general merits of RoPE for which $f_{pe}$ has $2T-1$ distinct outputs when $i,j \in \{0,\dots,T-1\}$, also accounting for the 2-dimensional nature of Eq.~\ref{Eq:3_supp_flux} (considering both $i_r$ and $i_c$), recall that in our case $T = h_{\text{hidden}} w_{\text{hidden}}$. We can have $f_{pe}$ attains $(2h_{\text{hidden}} -1)(2w_{\text{hidden}} -1)$ distinct outputs for FLUX. Concretely, when $h_{\text{hidden}} = w_{\text{hidden}} = \sqrt{T}$, this leads $|S_{pe}| = (2\sqrt{T} -1)^2$ for FLUX.  For Stable Diffusion~3, from that $f_{pe}(i,j) = \{\mathbf{p_i},\mathbf{p_j}\}$, we find that every distinct pair $(i,j)$ yields a distinct output of $f_{pe}$; therefore, $f_{pe}$ attains $T^2$ distinct outputs when $i,j \in \{0,\dots,T-1\}$. We can accordingly have $|S_{pe}| = T^2$ for Stable Diffusion~3. The above shows that, as $T$ grows rapidly, $|S_{pe}|$ typically scales accordingly for both FLUX and Stable Diffusion 3.

\section{More Details of Proposition 1 in the Main Paper}

In this section, we provide additional details on the adapted proof of Proposition 1 in the main paper. The proof follows a similar argument to that in \cite{han2023lm}.

\setcounter{prop}{0}%
\begin{prop}
\label{prop:1_restate}
Let $dis_{g}$ denote a distance measure quantifying the discrepancy between two pairwise positional signals in the output space of the function $g$.
By definition of $g$ in Eq.~3 in the main paper, $dis_{g}$ then measures the discrepancy between two pairwise positional signals, after they are respectively propagated through $g$ during attention computation.
Using $dis_{g}$, we partition all elements in the set $S_{pe}$ into $h(T)$ mutually exclusive subsets such that any two elements within the same subset satisfy $dis_{g} \le \epsilon$, while any two elements from different subsets satisfy $dis_{g} > \epsilon$, where $\epsilon$ is a predefined threshold.
Intuitively, the resulting $h(T)$ characterizes the positional embedding mechanism’s ability to distinguish between different pairwise positional signals in $S_{pe}$ after attention computation.
A larger $h(T)$, indicating that $S_{pe}$ can be divided into a greater number of well-separated subsets under $dis_{g}$, corresponds to a stronger distinction capability of the positional embedding mechanism.
Yet meanwhile, $h(T)$ can be proved to be upper-bounded by:
\begin{equation}
\setlength{\abovedisplayskip}{3pt}
\setlength{\belowdisplayskip}{3pt}
\label{eq:3_restate}
h(T) \;\le\; \lambda \cdot \Big(\sup_{S_{pe}} |g(\mathbf{x_i}, \mathbf{x_j}, f_{pe}(i,j))|\Big)^{2\xi},
\end{equation}
where $\xi$ denotes the pseudo-dimension of the function class $\mathcal{G} = \{\, g(\cdot, \cdot, f_{pe}(i,j)) \mid i \in \{0, \dots, T-1\},~ j \in \{0, \dots, T-1\} \,\}$, and $\lambda = (\frac{e}{\epsilon})^{2\xi} \cdot 2^{4\xi+1}$. 
Notably, $\lambda$ can be regarded as a constant when the threshold $\epsilon$ is fixed.
\end{prop}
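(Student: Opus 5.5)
The plan is to reduce the partition count $h(T)$ to an $\epsilon$-packing number of the function class $\mathcal{G}$, and then bound that packing number with Haussler's theorem (\cite{haussler1992decision}) in terms of the pseudo-dimension $\xi$ and the range of the functions. This is the same route taken in \cite{han2023lm} for relative positional embeddings. Concretely, I would define $dis_g$ as an $L^1$-type (empirical or distributional) distance between the functions $g(\cdot,\cdot,f_{pe}(i,j))$ and $g(\cdot,\cdot,f_{pe}(i',j'))$. The average is taken over a probability measure on the token inputs $(\mathbf{x_i},\mathbf{x_j})$, so that each element of $S_{pe}$ is identified with one member of $\mathcal{G}$.

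\textbf{Step 1: from the partition to a packing.} Take the $h(T)$ subsets and pick one representative from each. Any two representatives come from different subsets, so by the defining property of the partition their $dis_g$ is larger than $\epsilon$. The representatives therefore form an $\epsilon$-separated set in $(\mathcal{G},dis_g)$, which gives $h(T)\le \mathcal{M}(\epsilon,\mathcal{G},dis_g)$.

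\textbf{Step 2: normalize the range.} Let $M=\sup_{S_{pe}}|g|$, so every function in $\mathcal{G}$ takes values in $[-M,M]$. I would translate and rescale to obtain functions $\tilde g=(g+M)/(2M)$ with values in $[0,1]$. Pseudo-dimension is preserved by affine maps, so it stays equal to $\xi$. Distances scale by a factor $1/(2M)$, so the separation becomes $\epsilon/(2M)$.

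\textbf{Step 3: apply Haussler's bound.} For a $[0,1]$-valued class of pseudo-dimension $\xi$, Haussler's bound gives
\[
\mathcal{M}(\delta)\le 2\Big(\tfrac{2e}{\delta}\ln\tfrac{2e}{\delta}\Big)^{\xi}.
\]
I would use the crude estimate $\ln x\le x$ to replace this with $2(2e/\delta)^{2\xi}$. Substituting $\delta=\epsilon/(2M)$ gives
\[
h(T)\le 2\Big(\tfrac{4eM}{\epsilon}\Big)^{2\xi}=2\cdot 2^{4\xi}\Big(\tfrac{e}{\epsilon}\Big)^{2\xi}M^{2\xi}.
\]
This is exactly $\lambda M^{2\xi}$ with $\lambda=(e/\epsilon)^{2\xi}2^{4\xi+1}$, matching Eq.~\ref{eq:3_restate}. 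If the outputs are vector-valued, I would first apply the same argument coordinatewise, or take $|\cdot|$ as a norm and use the pseudo-dimension of the induced scalar class.

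\textbf{Main obstacle.} The delicate point is fixing $dis_g$ so that Haussler's theorem applies verbatim. The theorem is stated for an $L^1(P)$ metric on real-valued functions, whereas $g$ outputs vectors and $dis_g$ was described only informally. I would therefore define $dis_g$ explicitly as an $L^1(P)$ distance of a scalar projection. Checking the constants after the $\ln x\le x$ relaxation and the rescaling is routine once that choice is made.
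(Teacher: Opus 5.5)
Your argument is correct, apart from one range condition noted at the end. It shares its core with the paper's proof: Haussler's pseudo-dimension bound applied to $\mathcal{G}$ after shifting the range to an interval of length $2\sup|g|$, followed by $\ln x\le x$ to reach the exponent $2\xi$. The organization differs, though.

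The paper defines $dis_g$ as the expected \emph{squared} difference of the two outputs over the trained token distributions, and argues by contradiction. It assumes $h(T)>\lambda(\sup|g|)^{2\xi}$, sets $\tau=(h(T)/\lambda)^{1/(2\xi)}$ and shifts $[-\tau,\tau]$ to $[0,2\tau]$. It then applies a \emph{covering}-number form of Haussler's lemma with $\beta=2\tau$ to get $\mathcal{C}_{\mathbf{Q}}(\epsilon,\mathcal{G},dis_g)<h(T)$. Finally it asserts that this rules out $h(T)$ functions lying pairwise more than $\epsilon$ apart.

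Your route is sounder in two places. First, you bound $h(T)$ directly by the $\epsilon$-packing number via partition representatives, which is what Haussler's theorem actually controls. The paper's last step does not follow at the same scale: a radius-$\epsilon$ cover only limits packings with separation above $2\epsilon$ in a metric, and squared $L_2$ is not even a metric. Second, your $L^1$ choice is what makes the stated constants come out right. For a squared distance on $[0,\beta]$-valued functions the range enters as $\beta^2/\epsilon$, not $\beta/\epsilon$. The paper's lemma as written fails for large $\beta$, as rescaling a class of threshold functions shows. With the paper's $dis_g$ one would only get a bound in $(\sup|g|)^{4\xi}$, whereas your computation reproduces $\lambda=(e/\epsilon)^{2\xi}2^{4\xi+1}$ exactly. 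The paper's formulation buys only the intuitive ``expected squared discrepancy'' reading of $dis_g$.

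The caveat applies to both arguments. Haussler's bound requires $\delta=\epsilon/(2\sup|g|)\le 1$. If $\epsilon>2\sup|g|$, then $h(T)=1$ under your $L^1$ distance, while $\lambda(\sup|g|)^{2\xi}=2(4e\sup|g|/\epsilon)^{2\xi}$ can fall below $1$. You should therefore add $\epsilon\le 2\sup|g|$ as a hypothesis.
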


Before presenting the proof of Proposition~\ref{prop:1_restate}, the distance measure $dis_g$ is first formally defined. Specifically, for $dis_{g}$ to be able to quantify the discrepancy between two pairwise positional signals in the output space of the function $g$, $dis_{g}$ is defined with the help of expectation as:
\begin{equation}
\setlength{\abovedisplayskip}{3pt}
\setlength{\belowdisplayskip}{3pt}
\label{eq:dis_g}
\begin{aligned}
dis_{g}(f_{pe}(i, j), f_{pe}(i', j'))
= \mathbb{E}_{\mathbf{x_i}\sim\mathbf{X_i},\, \mathbf{x_j}\sim\mathbf{X_j}}
\Big[
\Big(
g(\mathbf{x_i}, \mathbf{x_j}, f_{pe}(i, j)) \\
\quad
- g(\mathbf{x_i}, \mathbf{x_j}, f_{pe}(i', j'))
\Big)^2
\Big]
\end{aligned}
\end{equation}
where $\mathbf{X_i}$ and $\mathbf{X_j}$ respectively denote the trained distributions of $\mathbf{x_i}$ and $\mathbf{x_j}$.

After $dis_g$ is defined, the proof of Proposition~\ref{prop:1_restate} is presented. To this end, the following lemma, drawn on from \cite{haussler1992decision,han2023lm}, is first introduced to facilitate the proof of the proposition.

\begin{lemma}
\label{lemma}
Let $\mathcal{J}=\{j(u)\}$ be a collection of functions defined on a set $\mathcal{U}$, each taking values in $[0,\beta]$.  
Assume that the pseudo-dimension of $\mathcal{J}$ is $dim_{\text{pseudo}}(\mathcal{J}) = \xi$ for some finite $\xi$. Let $\mathbf{Q}$ denote a probability measure on $\mathcal{U}$.  
For any $0 < \epsilon \le \beta$, consider the squared $L_2(\mathbf{Q})$ metric:
\begin{equation}
\rho(j_1,j_2)
    = \mathbb{E}_{u \sim \mathbf{Q}}
      \bigl[(j_1(u)-j_2(u))^2\bigr].
\end{equation}
Then the $\epsilon$-cover of $\mathcal{J}$ with respect to $\rho$ satisfies:
\begin{equation}
\mathcal{C}_{\mathbf{Q}}(\epsilon,\mathcal{J},\rho)
    \;\le\;
    2\left(
        \frac{2e\beta}{\epsilon}
        \,\ln\!\frac{2e\beta}{\epsilon}
      \right)^{\xi}.
\end{equation}
Here, $\mathcal{C}_{\mathbf{Q}}(\epsilon,\mathcal{J},\rho)$ denotes the smallest size of a cover-set $\mathcal{J}_{0}$ such that for every $j\in\mathcal{J}$, there exists $j_{0}\in\mathcal{J}_{0}$ with $\rho(j,j_{0}) \le \epsilon$.
\end{lemma}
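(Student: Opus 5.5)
The plan is to derive the lemma from Haussler's packing argument~\cite{haussler1992decision}, in four steps: reduce the cover to a packing, thin the packing on a random sample, count the thinned patterns with Sauer's lemma, and then compare the two counts. The reduction is standard. Let $\mathcal{J}_0 \subseteq \mathcal{J}$ be a maximal subset whose elements are pairwise $\epsilon$-separated under $\rho$. By maximality, every $j\in\mathcal{J}$ lies within $\rho$-distance $\epsilon$ of some element of $\mathcal{J}_0$. Hence $\mathcal{C}_{\mathbf{Q}}(\epsilon,\mathcal{J},\rho)\le |\mathcal{J}_0| =: M$, and it suffices to bound $M$.

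\textbf{Step 1: convert $\rho$-separation to a separation probability.} Since all functions take values in $[0,\beta]$, we have $(j_1-j_2)^2 \le \beta\,|j_1-j_2|$. So $\rho(j_1,j_2)>\epsilon$ forces $\mathbb{E}_{\mathbf{Q}}|j_1-j_2| > \epsilon/\beta$. Now draw $u\sim\mathbf{Q}$ and an independent threshold $r\sim\mathrm{Unif}[0,\beta]$. The indicators $\mathbf{1}[j_1(u)>r]$ and $\mathbf{1}[j_2(u)>r]$ differ with probability exactly $\mathbb{E}|j_1-j_2|/\beta$. Thus every pair in $\mathcal{J}_0$ is separated by a random point-threshold pair with probability larger than some $\delta>0$ that depends only on $\epsilon$ and $\beta$.

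\textbf{Step 2: sample and thin.} Draw $m$ i.i.d. pairs $(u_k,r_k)$. A fixed pair of functions in $\mathcal{J}_0$ receives identical binary labels on all $m$ pairs with probability at most $(1-\delta)^m\le e^{-\delta m}$. From $\mathcal{J}_0$ we delete one function from each pair that is not separated. In expectation at most $\binom{M}{2}e^{-\delta m}$ functions are removed. So for some realization at least $M - \binom{M}{2}e^{-\delta m}$ functions remain, and they all induce pairwise distinct labelings of the $m$ sample pairs.

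\textbf{Step 3: count labelings via pseudo-dimension.} The pseudo-dimension of $\mathcal{J}$ equals the VC dimension of the subgraph class $\{(u,r)\mapsto \mathbf{1}[j(u)>r]\}$, which is $\xi$. By Sauer's lemma, the number of distinct labelings on $m$ points is at most $(em/\xi)^\xi$. This gives
\begin{equation*}
M - \tbinom{M}{2}e^{-\delta m} \;\le\; (em/\xi)^{\xi}.
\end{equation*}
Choosing $m = \lceil \delta^{-1}\ln M\rceil$ makes the deletion term at most $M/2$. This yields $M \le 2(em/\xi)^\xi$ with $m\approx \delta^{-1}\ln M$, an implicit inequality of the form $M\le 2\bigl(\tfrac{e}{\xi\delta}\ln M\bigr)^{\xi}$. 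Solving it, in Haussler's way by bounding $\ln M$ through the same inequality, gives $M\le 2\bigl(\tfrac{2e}{\delta}\ln\tfrac{2e}{\delta}\bigr)^{\xi}$.

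\textbf{Main obstacle: matching the constants to the stated form.} We need $1/\delta$ to come out as $\beta/\epsilon$, so that $\tfrac{2e}{\delta}$ becomes $\tfrac{2e\beta}{\epsilon}$. The crude inequality in Step~1 only gives $\delta = \epsilon/\beta^2$. That matches $\beta/\epsilon$ when $\beta\le 1$, or when $\epsilon$ is read on the $L_1$ scale, as in Haussler's original $L_1$ statement. I would first attempt to follow Haussler's argument verbatim for the $L_1(\mathbf{Q})$ metric and then transfer the result to $\rho$. If that transfer fails for $\beta>1$, I would treat it as a normalization issue and do one of two things. Either assume $\beta\le 1$ after rescaling the functions by $1/\beta$, which changes $\epsilon$ to $\epsilon/\beta^2$ in $\rho$. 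Or restate the bound with $\epsilon/\beta$ in place of $\epsilon$ inside the logarithmic expression. Either choice leaves the $(\cdot)^{\xi}$ dependence on the pseudo-dimension, which is what the downstream use in Proposition~\ref{prop:1_restate} requires.
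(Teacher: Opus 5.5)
Your route is the right one. The paper gives no proof of this lemma; it imports it from \cite{haussler1992decision} via \cite{han2023lm}. Your Steps 1--3 are exactly Haussler's argument for his packing bound $2\bigl(\frac{2e\beta}{\epsilon}\ln\frac{2e\beta}{\epsilon}\bigr)^{\xi}$, and that bound is a statement about the $L_1(\mathbf{Q})$ distance $\mathbb{E}_{\mathbf{Q}}|j_1-j_2|$.

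The genuine problem is the step you flag as the main obstacle. It is not a normalization issue that a better transfer could fix: the inequality as written (squared $L_2$ metric $\rho$, ratio $\beta/\epsilon$) is false for large $\beta$. Take $\mathcal{J}$ to be the constant functions $u\mapsto c$ with $c\in[0,\beta]$.
\begin{itemize}
\item Its pseudo-dimension is $\xi=1$: for two points with thresholds $r_1\le r_2$, the pattern $(0,1)$ is impossible.
\item Here $\rho(c_1,c_2)=(c_1-c_2)^2$, so a $\rho$-ball of radius $\epsilon$ contains only constants from an interval of length $2\sqrt{\epsilon}$. Hence $\mathcal{C}_{\mathbf{Q}}(\epsilon,\mathcal{J},\rho)\ge\beta/(2\sqrt{\epsilon})$.
\item At $\epsilon=\beta$ this is $\sqrt{\beta}/2$, whereas the claimed bound is the constant $4e\ln(2e)\approx 18.4$. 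Already $\beta=\epsilon=10^4$ needs at least $50$ balls.
\end{itemize}
Structurally, the map $j\mapsto cj$ sends $(\beta,\epsilon)$ to $(c\beta,c^2\epsilon)$ and leaves the cover number unchanged. So any correct bound for $\rho$ must be driven by $\beta^2/\epsilon$, which is exactly the $\delta=\epsilon/\beta^2$ your Step 1 produces.

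What your argument actually establishes is the corrected lemma:
\begin{itemize}
\item For $\beta\le 1$ the stated bound holds. Then $(j_1-j_2)^2\le|j_1-j_2|$, so $\rho\le d_{L^1(\mathbf{Q})}$ and Haussler's theorem applies verbatim. Equivalently, run your sampling argument with the weaker separation probability $\epsilon/\beta$.
\item For general $\beta$ one gets $\mathcal{C}_{\mathbf{Q}}(\epsilon,\mathcal{J},\rho)\le 2\bigl(\frac{2e\beta^2}{\epsilon}\ln\frac{2e\beta^2}{\epsilon}\bigr)^{\xi}$ for $\epsilon\le\beta^2$. For $\epsilon\ge\beta^2$ a single function covers everything.
\end{itemize}
Both of your proposed fixes (rescaling to $[0,1]$, or substituting $\epsilon/\beta$ for $\epsilon$) produce this $\beta^2$ bound, not the stated one. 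You should present them as a correction of the lemma, not as a proof of it.

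Your closing remark about the downstream use is also too optimistic. Proposition~\ref{prop:1_restate} applies the lemma with $\beta=2\tau$, and $\tau$ grows with $h(T)$, so $\beta>1$ is exactly the regime that matters. Redoing that contradiction argument with the corrected lemma and $\ln x<x$ yields only $h(T)\le 2\bigl(\frac{8e}{\epsilon}\bigr)^{2\xi}\bigl(\sup_{S_{pe}}|g|\bigr)^{4\xi}$. So the exponent $2\xi$ and the constant $\lambda$ of Eq.~\eqref{eq:3_restate} do not follow, although a polynomial bound in $\sup|g|$ does.

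The other details of your sketch are routine: existence of a maximal separated set, Sauer's lemma needing $m\ge\xi$, and the ceiling in $m$ when solving the implicit inequality.
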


With this lemma introduced, Proposition~\ref{prop:1_restate} can then be proved as follows, following the proof strategy of \cite{han2023lm}.

\begin{proof}
Assume, for the sake of contradiction, that the statement in Eq.~\ref{eq:3_restate} of Proposition~\ref{prop:1_restate} fails to hold.  
That is, suppose
\begin{equation}
\label{eq:proof_1_1}
h(T) > \lambda \cdot \Big(\sup_{S_{pe}} |g(\mathbf{x_i}, \mathbf{x_j}, f_{pe}(i,j))|\Big)^{2\xi},
\end{equation}
Via rewriting Eq.~\ref{eq:proof_1_1}, since both $\xi$ and $\lambda$ are positive, the following can be obtained:
\begin{equation}
\label{eq:proof_1_2}
\sup_{S_{pe}} |g(\mathbf{x_i}, \mathbf{x_j}, f_{pe}(i,j))| < (\frac{h(T)}{\lambda})^{\frac{1}{2\xi}}
\end{equation}
To facilitate the following proof, denote $\tau := (\frac{h(T)}{\lambda})^{\frac{1}{2\xi}}$. Note that, Eq.~\ref{eq:proof_1_2} means that, each function in $\mathcal{G} = \{\, g(\cdot, \cdot, f_{pe}(i,j)) \mid i \in \{0, \dots, T-1\},~ j \in \{0, \dots, T-1\} \,\}$ takes values strictly inside the interval $[-\tau,\tau]$. Without loss of generality, the interval $[-\tau,\tau]$ can be shifted to $[0,2\tau]$. Under the mild assumption that the family
$\mathcal{G} = \{, g(\cdot, \cdot, f_{pe}(i,j)) \mid i \in {0, \dots, T-1},~ j \in {0, \dots, T-1}\}$
has finite pseudo-dimension, Lemma~\ref{lemma} can then be directly applied. This assumption is mild in our setting and is consistent with prior analyses of commonly used model families \cite{han2023lm,li2025longdiff}.

From Lemma~\ref{lemma}, the $\epsilon$-cover of $\mathcal{G}$ with respect to $\rho$ satisfies:
\begin{equation}
\label{eq:proof_1_3}
\mathcal{C}_{\mathbf{Q}}(\epsilon,\mathcal{G},dis_g)
    \;\le\;
    2\left(
        \frac{2e\times 2\tau}{\epsilon}
        \,\ln\!\frac{2e\times 2\tau}{\epsilon}
      \right)^{\xi}.
\end{equation}
Recall that $\tau = (\frac{h(T)}{\lambda})^{\frac{1}{2\xi}}$ and  $\lambda = (\frac{e}{\epsilon})^{2\xi} \cdot 2^{4\xi+1}$. Using these, Eq.~\ref{eq:proof_1_3} can be rewritten into:
\begin{equation}
\mathcal{C}_{\mathbf{Q}}(\epsilon,\mathcal{G},dis_g)
    \le 
    2 \left(
      \left(\frac{h(T)}{2}\right)^{\!\frac{1}{2\xi}}
      \ln\!\left(\frac{h(T)}{2}\right)^{\!\frac{1}{2\xi}}
    \right)^{\xi}.
\end{equation}
Since $\ln x < x$ for all $x>0$, the following can be further obtained:
\begin{equation}
\mathcal{C}_{\mathbf{Q}}(\epsilon,\mathcal{G},dis_g)
    < 
    2 \left(
      \left(\frac{h(T)}{2}\right)^{\!\frac{1}{2\xi}}
      \left(\frac{h(T)}{2}\right)^{\!\frac{1}{2\xi}}
    \right)^{\xi} = h(T).
\end{equation}

The above means that, under the assumption in Eq.~\ref{eq:proof_1_1}, the $\epsilon$-cover $\mathcal{C}_{\mathbf{Q}}(\epsilon,\mathcal{G},dis_g)$ cannot reach $h(T)$. Instead, it needs to in fact be strictly smaller. Equivalently, within the function family $\mathcal{G} = \{\, g(\cdot, \cdot, f_{pe}(i,j)) \mid i \in \{0, \dots, T-1\},~ j \in \{0, \dots, T-1\} \,\}$, it would be impossible to identify $h(T)$ distinct functions whose cross-distances, measured by $dis_g$, all exceed $\epsilon$. Consequently, the model would be capable of distinguishing fewer than $h(T)$ pairwise positional signals, which directly contradicts the definition of $h(T)$. Therefore, the assumption in Eq.~\ref{eq:proof_1_1} does not hold. Eq.~\ref{eq:3_restate} in Proposition~\ref{prop:1_restate} can thus be proved.
\end{proof}

\section{More Details on the Brute-force Estimation of $I_q(n_{\text{head}}, n_{\text{scope}})$}

Inspired by \cite{zhang2025fast}, we observe that a brute-force estimation of $I_q(n_{\text{head}}, n_{\text{scope}})$ can be carried out as follows. For the $n_{\text{head}}$-th attention head, we keep all other heads at the full-attention scope and sequentially assign this head to each of its $N_{\text{scope}}$ candidate scopes. For every candidate scope, we perform one full generation pass of the DiT model and estimate $I_q(n_{\text{head}}, n_{\text{scope}})$ as the difference between the loss obtained under the full-attention configuration and the loss obtained under the $n_{\text{scope}}$-th candidate scope. Under this brute-force strategy, beyond the single generation pass where all heads use full attention, we additionally require $N_{\text{head}} \times N_{\text{scope}}$ further DiT model passes, rendering the estimation of $I_q(n_{\text{head}}, n_{\text{scope}})$ computationally prohibitive. Moreover, even if one of the $N_{\text{scope}}$ candidate scopes corresponds to the full-attention setting, we would still need $N_{\text{head}} \times (N_{\text{scope}} - 1)$ additional passes on top of the full-attention run. Thus, the computational burden remains prohibitive even in this case.

\section{Proof that the Slide Operation Yields a Unique Representation for Each Token Index}

In this section, we theoretically show that, the slide operation in SPA 
produces a unique positional representation for each token index \(i \in \{0,1,\dots,T-1\}\). Specifically, recall that given the bundle size $N$ for each middle bundle, the slide operation first constructs $N$ variants of the mapping function $\phi_{bundle}(\cdot)$, denoted as $\{\phi_{bundle}^{(N_1 = 1)}(\cdot), \dots, \phi_{bundle}^{(N_1 = N)}(\cdot)\}$, where:
\begin{equation}
\label{eq:bundle}
\setlength{\abovedisplayskip}{3pt}
\setlength{\belowdisplayskip}{3pt}
\phi_{bundle}(i) =
\begin{cases}
0, & 0 \le i < N_1, \\[3pt]
\displaystyle \left\lceil \dfrac{i + 1 - N_1}{N} \right\rceil, & N_1 \leq i < T,
\end{cases}
\end{equation}
For each token \(i\), the slide operation further constructs a structured representation of the token index $i$ as $\mathbf{v_{bundle}}(i)=\{\phi_{\text{bundle}}^{(N_1 = 1)}(i),\dots,\phi_{\text{bundle}}^{(N_1 = N)}(i)\}$. Below, we prove that $\mathbf{v_{bundle}}(i)$ is guaranteed to be unique for every $i$. To achieve this proof, we first introduce a lemma.

\begin{lemma}
\label{lemma:sum}
For any \(i \in \{0, 1, \dots, T-1\}\), the sum of the bundle indices in $\mathbf{v_{bundle}}(i)$ is:
\begin{equation}
\sum_{n=1}^N \phi_{\text{bundle}}^{(N_1=n)}(i) = i
\end{equation}
\end{lemma}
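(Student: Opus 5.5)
The plan is to first collapse the two-case definition of $\phi_{bundle}$ in Eq.~\ref{eq:bundle} into a single closed form, and then reduce the lemma to Hermite's identity for floor functions.

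\textbf{Step 1 (unifying the cases).} For a fixed $n=N_1\in\{1,\dots,N\}$, I would show that
\begin{equation*}
\phi_{bundle}^{(N_1=n)}(i)=\left\lceil \frac{i+1-n}{N}\right\rceil \quad \text{for every } i\in\{0,\dots,T-1\}.
\end{equation*}
For $i\ge n$ this is exactly the second case of Eq.~\ref{eq:bundle}. For $0\le i<n$ we have $-N< 1-n\le i+1-n\le 0$, so $(i+1-n)/N\in(-1,0]$. Its ceiling is therefore $0$, which matches the first case. Consequently
\begin{equation*}
\sum_{n=1}^N \phi_{bundle}^{(N_1=n)}(i)=\sum_{k=0}^{N-1}\left\lceil \frac{i-k}{N}\right\rceil ,
\end{equation*}
where I have substituted $k=n-1$.

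\textbf{Step 2 (converting to floors).} For integers $a$ and $N\ge 1$ one has $\lceil a/N\rceil=\lfloor (a+N-1)/N\rfloor$. Applying this with $a=i-k$ and reindexing with $k'=N-1-k$, which again ranges over $\{0,\dots,N-1\}$, the sum becomes
\begin{equation*}
\sum_{k'=0}^{N-1}\left\lfloor \frac{i+k'}{N}\right\rfloor .
\end{equation*}
This is the integer form of Hermite's identity.

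\textbf{Step 3 (Hermite's identity).} Write $i=qN+r$ with $0\le r<N$. For each $k'\in\{0,\dots,N-1\}$:
\begin{itemize}
\item if $k'<N-r$, then $\lfloor (i+k')/N\rfloor=q$;
\item if $k'\ge N-r$, then $\lfloor (i+k')/N\rfloor=q+1$, since then $N\le r+k'\le 2N-2$.
\end{itemize}
Exactly $r$ indices fall into the second case, so the sum equals $Nq+r=i$, which proves Lemma~\ref{lemma:sum}.

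The only delicate point is Step 1: the first bundle has size $N_1$ rather than $N$, so one must check that the ceiling formula correctly returns $0$ there. This holds because the argument lies in $(-1,0]$ and never reaches $-1$. Everything after that step is a standard counting argument.
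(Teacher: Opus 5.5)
Your proof is correct and is essentially the paper's argument: both write $i=qN+r$ and count that $N-r$ of the bundle indices equal $q$ and $r$ of them equal $q+1$. The differences are minor. You absorb the paper's separate case $0\le i<N$ by checking that $\lceil (i+1-n)/N\rceil=0$ whenever $i<n\le N$, which lets a single counting argument cover every $i$. You also phrase the count through floors (Hermite's identity), whereas the paper counts the ceilings $\lceil l/N\rceil$ directly.
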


\begin{proof}
We consider two cases based on the value of \(i\) relative to \(N\). 

\textbf{Case 1: \(0 \le i < N\).} For a fixed \(i\), we split the sum over \(n = \{1, 2, \dots, N\}\) into two sub-ranges. \textit{Sub-range 1: \(n > i\)}: In this case, we have \(\phi_{\text{bundle}}^{(N_1 = n)}(i) = 0\). There are \(N - i\) such \(n\). \textit{Sub-range 2: \(n \le i\)}: Here, \(\phi_{\text{bundle}}^{(N_1 = n)}(i) = \left\lceil \frac{i + 1 - n}{N} \right\rceil\). Here, for $n \le i < N$, we have $n + 1 \le i + 1 \le N$ and thus have $1 \le i + 1 - n \le N$. This gives us $\left\lceil \dfrac{i + 1 - n}{N} \right\rceil = 1$. There are \(i\) such \(n\). The total sum in this case is then:
\begin{equation}
\sum_{n=1}^N \phi_{\text{bundle}}^{(N_1=n)}(i) = (N - i) \cdot 0 + i \cdot 1 = i.
\end{equation}

\textbf{Case 2: \(i \ge N\).} In this case, for all \(n = \{1, 2, \dots, N\}\), we have \(i \ge n\), so:
$\phi_{\text{bundle}}^{(N_1=n)}(i) = \left\lceil \frac{i + 1 - n}{N} \right\rceil$. The sum is then:
\begin{equation}
\sum_{n=1}^N \phi_{\text{bundle}}^{(N_1=n)}(i) = \sum_{n=1}^N \left(\left\lceil \frac{i + 1- n}{N} \right\rceil \right).
\end{equation}
Let \(l = i + 1 - n\), so as \(n\) goes from 1 to \(N\), \(l\) goes from \(i \) down to \(i + 1 - N\). The sum becomes: $\sum_{l=i+1-N}^{i} \left\lceil \frac{l}{N} \right\rceil$. Then let \(i = qN + r\), where \(q = \left\lfloor \frac{i}{N} \right\rfloor\) and \(r = i \mod N\) (\(0 \le r < N\)). Then the range of \(l\) is from \(i + 1 - N = (q-1)N + r + 1\) to \(i = qN + r\), which we also divide into two sub-ranges.
\textit{Sub-range 1: \(l = (q-1)N + r + 1\) to \(qN\)}: There are \(N - r\) terms. For these \(l\), \(\frac{l}{N} \le \frac{qN}{N} = q\), but since \(l > (q-1)N\), \(\left\lceil \frac{l}{N} \right\rceil = q\). \textit{Sub-range 2: \(l = qN + 1\) to \(qN + r\)}: There are \(r\) terms, and \(\left\lceil \frac{l}{N} \right\rceil = q + 1\). Thus:
\begin{equation}
\begin{aligned}
\sum_{n=1}^N \phi_{\text{bundle}}^{(N_1=n)}(i)
&= \sum_{n=1}^N \left\lceil \frac{i + 1 - n}{N} \right\rceil  \\
&= \sum_{l=i+1-N}^{i} \left\lceil \frac{l}{N} \right\rceil  \\
&= (N - r) \cdot q + r \cdot (q + 1)  \\
&= qN - qr + qr + r  \\
&= qN + r  \\
&= i .
\end{aligned}
\end{equation}

The above two cases together prove Lemma~\ref{lemma:sum}.
\end{proof}

We now leverage Lemma~\ref{lemma:sum} to prove uniqueness. Assume, for the sake of contradiction, that two distinct token indices \(i\neq j\) have $\mathbf{v_{bundle}}(i) = \mathbf{v_{bundle}}(j)$. Applying Lemma~\ref{lemma:sum} we have $i = \sum_{n=1}^N \phi_{\text{bundle}}^{(N_1=n)}(i) = \sum_{n=1}^N \phi_{\text{bundle}}^{(N_1=n)}(j) = j$, which contradicts $i\neq j$. Thus, every token index \(i\) is guaranteed to correspond to a distinct (unique) $\mathbf{v_{bundle}}(i)$.

\end{document}